\RequirePackage[l2tabu, orthodox]{nag}
\documentclass[12pt]{article}

\pdfoutput=1

\usepackage[parfill]{parskip}
\usepackage[margin=1in]{geometry}
\usepackage[defaultlines=3,all]{nowidow}

\usepackage[T1]{fontenc}
\usepackage{microtype}
\usepackage[english]{babel}
\usepackage{libertine}             
\usepackage{titlesec}
\titleformat*{\section}{\Large\bfseries}

\usepackage{amsmath}
\usepackage{amssymb}
\usepackage{amsthm}
\usepackage{mathtools}

\usepackage{amsfonts}       
\usepackage{nicefrac}       
\usepackage{microtype}      
\usepackage{mathrsfs}
\usepackage{mathabx}

\usepackage[dvipsnames]{xcolor}

\usepackage[sort]{natbib}
\usepackage{bibunits}

\usepackage{hyperref}
\hypersetup{colorlinks,linktoc=all}
\usepackage[all]{hypcap}
\hypersetup{citecolor=Violet}
\hypersetup{linkcolor=black}
\hypersetup{urlcolor=MidnightBlue}

\usepackage{xurl}   

\usepackage[nameinlink]{cleveref}

\usepackage{booktabs}       
\usepackage[inline]{enumitem}
\usepackage{bm,bbm}
\usepackage{fontawesome}
\usepackage{setspace}
\usepackage{dsfont}

\usepackage{caption}
\usepackage{subcaption}
\usepackage{appendix} 
\usepackage{pgfplots}
\usepackage{tikz}
\usepackage{tikzpeople}
\usepgflibrary{arrows}
\usetikzlibrary{shapes,shadows,decorations,arrows,calc,arrows.meta,fit,positioning}
\usetikzlibrary{fit,backgrounds}
\usetikzlibrary{shadows.blur}
\usetikzlibrary{shadings,intersections}
\usetikzlibrary{snakes}

\newtheorem{definition}{Definition}
\newtheorem{theorem}{Theorem}

\newtheorem{proposition}{Proposition}
\newtheorem{example}{Example}
\newtheorem{remark}{Remark}
\newtheorem{lemma}{Lemma}


\bibpunct[; ]{(}{)}{,}{a}{}{;}

\title{(Im)possibility of Collective Intelligence}

\author{
Krikamol Muandet\thanks{
The author is indebted to Bernhard Sch\"olkopf, Moritz Hardt, Kate Larson, Elias Bareinboim, Arash Mehrjou, Marina Munkhoeva, Adrian Javaloy, Thiparat Chotibut, Sebastian Stich, Samira Samadi, Pasin Manurangsi, Celestine Mendler-Duenner, Isabel Valera, Yassine Nemmour, Bruno Kacper Mlodozeniec, Simon Buchholz, Junhyung Park, Felix Leeb, Nasim Rahaman, Heiner Kremer, Emtiyaz Khan, Gill Blanchard, Laurent Chaplin, and Aritz P\'{e}rez for fruitful discussions and constructive feedback. The author also thanks
the participants of the seminars at Max Planck Institute for Intelligent Systems (MPI-IS), Bilbao Workshop on The Mathematics of Machine Learning, Korea Institute for Advanced Study (KIAS), RIKEN-AIP, and Mila - Quebec AI Institute for raising several thought-provoking questions about this work.
Finally, the author thanks anonymous reviewers whose feedback has significantly improved the manuscript.
} \\
CISPA Helmholtz Center for Information Security \protect \\ Stuhlsatzenhaus 5, 66123 Saarbr\"ucken, Germany \\
\url{muandet@cispa.de}, \url{km@cifer.ai}
}

\date{\today}


\newcommand{\hc}{\ensuremath{H}}
\newcommand{\hsc}{\ensuremath{\mathcal{H}}}
\newcommand{\fc}{\ensuremath{\mathcal{F}}}
\newcommand{\gc}{\ensuremath{\mathcal{G}}}

\newcommand{\hsp}{\ensuremath{\mathscr{H}}}
\newcommand{\hcol}{\ensuremath{B}}
\newcommand{\algo}{\ensuremath{\mathbb{A}}}

\newcommand{\lscol}{\ensuremath{C}}
\newcommand{\msel}{\ensuremath{\mathbb{S}}}
\newcommand{\lcol}[2]{\ensuremath{\text{LS}(#1,#2)}}

\newcommand{\rr}{\ensuremath{\mathbb{R}}}

\newcommand{\X}{\ensuremath{\mathcal{X}}}

\newcommand{\rp}{\ensuremath{\mathbf{r}}}

\begin{document}
\maketitle

\begin{bibunit}[alp]

\begin{abstract}
  Modern applications of AI involve training and deploying machine learning models across heterogeneous and potentially massive environments. 
  Emerging diversity of data not only brings about new possibilities to advance AI systems, but also restricts the extent to which information can be shared across environments due to pressing concerns such as privacy, security, and equity.
  Based on a novel characterization of learning algorithms as choice correspondences on a hypothesis space, this work provides a minimum requirement in terms of intuitive and reasonable axioms under which the only rational learning algorithm in heterogeneous environments is an empirical risk minimization (ERM) that unilaterally learns from a single environment without information sharing across environments.
  Our (im)possibility result underscores the fundamental trade-off that any algorithms will face in order to achieve Collective Intelligence (CI), i.e., the ability to learn across heterogeneous environments.
  Ultimately, collective learning in heterogeneous environments are inherently hard because, in critical areas of machine learning such as out-of-distribution generalization, federated/collaborative learning, algorithmic fairness, and multi-modal learning, it can be infeasible to make meaningful comparisons of model predictive performance across environments.

  \vspace{1em}
  \noindent \textbf{Keywords.} Democratization of AI, social choice theory, OOD generalization, federated learning, algorithmic fairness, multi-modal learning, collaborative learning
\end{abstract}

\section{Introduction}

Artificial intelligence (AI) systems are ubiquitous in every part of society and business. 
The main driving force of its success is a general-purpose method called machine learning (ML) that can turn gigantic amount of data into a powerful predictive model.
Some become recommendation engines \citep{Konstan12:Recommender, Harper15:MovieLens}, some become facial recognition systems \citep{Kamel93:Face, Zhu12:FaceDP, Schroff15:FaceNet, Buolamwini18:GenderShades}, some become large language models \citep{Aharoni19:Massively,Brown20:LLM,OpenAI23:GPT4}, and so on.
Traditionally, training data for ML algorithms are assumed identically and independently distributed (i.i.d) because they often come from a homogeneous environment.
In this scenario, capability to train ML models with billion parameters at scale is the cornerstone of several AI-driven breakthroughs in science \citep{Jumper21:ALphaFold} and engineering \citep{Mirhoseini21:ChipDesign}.
On the other hand, democratization of AI requires these models to be trained and deployed across heterogeneous and potentially massive environments where the i.i.d. assumption is almost always violated. 
For example, multi-source data is an essential part in multi-task learning \citep{Caruana97:MTL,Zhang21:MTL-Review}, domain generalization (DG) \citep{Blanchard11:Generalize, Muandet13:DG, Mahajan21:DG-CausalMatching, Wang21:DG-Review, Zhou21:DG-Review, Singh24:DGIL}, and out-of-distribution (OOD) generalization \citep{Arjovsky19:OOD,Wald21:OOD-Calibration} that improves model performance. 
Also, real-world data often arrive in different modalities ranging from visual data (e.g., images and videos) to language data (e.g., text and speech).
Some have recognized cross-modal learning as a hallmark of artificial general intelligence (AGI) \citep{Jean-Baptiste22:Flamingo,Reed22:Gato}.

But with great power also comes great responsibility. 
Such predictive models have recently been used in aiding high-stakes decision-making in health care \citep{Tomav19:Clinical,Wiens19:Roadmap,Ghassemi22:ML-Health}, employment \citep{Deshpande20:Ai-Resume}, and criminal justice \citep{Angwin16:MachineBias}. 
In human-centric domains, heterogeneity observed across data points, on the one hand, may represent an inherent diversity within a population that the learning algorithm must account for \citep{Heckman01:Heterogeneity}.
On the other hand, it may reflect prejudice and societal biases against specific demographic groups that have historically influenced the collection and distribution of data. 
As a result, a growing concern on the disproportionate impact of algorithmic models has not only sparked a cross-disciplinary collaborations to increase fairness and transparency in today's AI systems, but also created a sense of responsibility among legislatures to regulate them \citep{Hardt16:EO,Barocas19:FML,Zafar19:Fairness,Kilbertus20:FairImperfect,Giffen22:Pitfalls}.
Additionally, when training data are scattered over potentially massive network of remote devices such as mobile phones or siloed data from hospitals, matters pertaining to privacy, security, and access rights may prevent data sharing across environments.
Federated learning (FL), for example, addresses these challenges by developing algorithms that rely only on privacy-aware aggregated information without direct access to local data sets \citep{Konecny16:FedOpt,Mcmahan17:FL,Li20:FL,Kairouz21:FL}.

The evolving diversity within data presents promising prospects for enhancing our existing AI systems, but a critical trade-off inherent in tackling learning problems within such context remains inadequately comprehended, particularly as we grapple with real-world constraints like privacy, security, and equity.

To gain better understanding on this trade-off, this work aims to answer the following question: \emph{Given training data from several heterogeneous environments, is it possible to design a rational  learning algorithm that can learn successfully across these environments?}
Inspired by the Arrow's Impossibility Theorem \citep{Arrow50:Impossibility} and its descendants in social choice theory \citep{Sen70:CCSW,Sen17:CCSW}\footnote{See \citet{Patty19:SocialChoice} for a recent survey.}, we adopted an axiomatic method to systematically answer this question. 
In particular, we provide intuitive and reasonable axioms under which an empirical risk minimization (ERM) is the only rational learning algorithm in heterogeneous environments.\footnote{This work refers to ERM loosely as the algorithm that learns by minimizing the \emph{empirical} risk function using data collected from a single environment (\Cref{sec:homogeneous}). When heterogeneity is disregarded, another popular baseline is ERM on data pooled from all environments as if it was a single i.i.d. data set. However, it can be deemed practically undesirable in many critical applications such as health care and criminal justice.}
Our key insight is that a learning algorithm is nothing, but an implementation of choice behaviour of its designer over a model class, which can then be modelled by a choice correspondence, i.e., a function mapping a set of candidate models to a corresponding subset of optimal solutions (cf. \Cref{def:learning-rule}). 
This formulation allows us to subsequently impose the desirable properties of any conceivable learning algorithms, namely, Pareto Optimality (PO), Independence of Irrelevant Hypotheses (IIH), Invariance Restriction (IR), and Collective Intelligence (CI).
Intuitively, the PO property generalizes the notion of ``minimum risk'' in the classical setting to heterogeneous environments. 
Next, the IIH property demands that learning algorithms should not be sensitive to ``irrelevant'' information when choosing between two models.
Similarly, the IR property ensures that the algorithm's outputs remain unchanged for the risk functionals that are informationally identical.
Last but not least, we argue that the CI property, which demands that the algorithm leverages information across multiple environments, is indispensable for two reasons.
First, an algorithm that lacks CI property fails to recognize the added diversity of data and thereby does not lead to any meaningful improvement over ERM. 
Second, the lack of CI means that one (or a few) of the heterogeneous environments, which may correspond to a specific individual, demographic group, or institution, over-proportionally determines the outcomes of the ML algorithm (see \Cref{sec:axioms} and \Cref{sec:implications} for an in-depth discussion).


\subsection{Our Contributions}

We perceive the challenges of learning in heterogeneous environments as an aggregation function
\begin{equation*}
    F: (r_1,\ldots,r_n) \mapsto (\hc,\hcol,\algo), \quad n \in \mathbb{N},
\end{equation*}
which entails using a risk profile comprising risk functions calculated across $n$ environments to formulate a blueprint of a learning algorithm. This algorithm selectively determines the optimal models from the hypothesis class $\hc$, i.e., $\algo(\hsc)\subseteq\hsc$ for all $\hsc\in\hcol$ where $\hcol$ is a collection of non-empty subsets of $\hc$.
We establish the fundamental properties of both the aggregation function $F$ and the blueprint of learning algorithms $(\hc,\hcol,\algo)$.

The contributions of this work can be summarized as follows:
\begin{itemize}
    \item We introduce a novel choice-theoretic perspective of machine learning. 
    A learning problem is expressed mathematically as a triple $(\hc,\hcol,\algo)$, called a \emph{learning structure}, where $\hc$ is a subset of the hypothesis space $\hsp$ comprising all conceivable hypotheses pertaining to the learning problem. 
    Designed by the researchers, $\hcol$ and $\algo$ denote respectively a collection of non-empty subsets of $\hc$ and a learning rule represented as a choice correspondence that specifies for any
    feasible set in $\hcol$ a nonempty subset of hypotheses (cf. \Cref{def:learning-rule}).  
    The learning structure $(\hc,\hcol,\algo)$ serves as a building block of a model selection structure (cf. \Cref{def:model-selection-structure}) and subsequently the two-stage model of machine learning (cf. \Cref{sec:two-stage-model}).
    Based on this formulation, we provide a characterization of rational learning algorithms (cf. \Cref{sec:internal-consistency}).
    Our characterization provides a more holistic view of learning algorithms, which subsumes the conventional risk-minimization perspective. 

    \item Agnostic to specific applications, we argue that Pareto Optimality (PO), Independence of Irrelevant Hypotheses (IIH), Invariance Restriction (IR), and Collective Intelligence (CI) are primitive properties of any aggregation function $F$. 
    When there is a finite number of two or more environments and at least three distinct hypotheses, we subsequently show that 
    the only rational algorithm that is compatible with PO, IIH, and IR is an empirical risk minimization (ERM) that unilaterally learns from a single environment (cf. \Cref{lem:erm}), contradicting the CI property.
    The implication of this is that there cannot exist an aggregation function that satisfies these properties simultaneously, yet can produce rational learning algorithm (cf. \Cref{thm:impossibility}). 
    In other words, we establish the fundamental trade-off for learning algorithms that can learn successfully across heterogeneous environments.

    \item We provide a thorough discussion on practical implications of our general (im)possibility results in critical areas of modern machine learning including OOD generalization, federated learning, collaborative learning, algorithmic fairness, and multi-modal learning (cf. \Cref{sec:implications}). 
    In each of these areas, we provide concrete examples to illustrate how our results can be practically relevant and also suggest ways to overcoming this trade-off (cf. \Cref{sec:escaping}). 
    Most notably, we argue that learning in heterogeneous environments is inherently hard because, unlike in a homogeneous environment, it may not be feasible to always make meaningful comparisons of model performances across environments due to physical, ethical, and legal constraints. 
    As a result, we advocate for secure and trustworthy mechanisms that enable and incentivize dissemination of relevant information across environments as a prerequisite for generalizable, fair, and democratic learning algorithms.
\end{itemize}

The paper is organized as follows. \Cref{sec:learning-rule} introduces a choice-theoretic perspective of machine learning, followed by a characterization of rational algorithms in \Cref{sec:internal-consistency}. 
Next, \Cref{sec:heterogeneous-learning} provides a formulation of learning problem in heterogeneous environments and presents our (im)possibility results.
\Cref{sec:escaping} discusses ways to overcome this impossibility, followed by direct implications of our main results on critical domains of machine learning in \Cref{sec:implications}. 
Finally, Section \ref{sec:conclusion} discusses the limitations of this work and concludes the paper.

\section{Algorithmic Choice}
\label{sec:learning-rule}

This section introduces a learning structure $(\hc,\hcol,\algo)$ as a blueprint of learning problems.

\subsection{Learning Structure}

Let $\hsp$ be a (possibly infinite) set of all conceivable hypotheses. 
A learning problem involves the process of designing, implementing, and executing a program that chooses the best solutions from $\hsp$.
Specifically, a learning algorithm $\algo$ specifies for any feasible non-empty subset $\hc\subseteq\hsp$ a nonempty subset $\algo(\hc)\subseteq\hc$. 
We refer to each subset $\hc$ of $\hsp$ as a model class.
The set of natural numbers is denoted by $\mathbb{N}$, and for $n\in\mathbb{N}$, $[n]:=\{1,2,\ldots,n\}$. The set of real numbers and its positive part are $\rr$ and $\rr_+$, respectively.

\begin{definition}
\label{def:learning-rule}
A learning structure is a triple $(\hc,\hcol,\algo)$ where $\hc$ is a non-empty subset of $\hsp$ that consists of all feasible hypotheses and $\hcol$ is a collection of non-empty subsets of $\hc$. A learning algorithm is a mapping $\algo:\hcol\to2^{\hc}$ that fulfills the following conditions:
\begin{enumerate}[label=(\roman*)]
    \item \textbf{Nonresponsiveness}: $\algo(\emptyset) = \emptyset$.
    \item \textbf{Properness}: $\algo(\hsc) \subseteq \hsc$ for all $\hsc\in\hcol$.
    \item \textbf{Conclusiveness}: $\algo(\hsc) \neq \emptyset$ for all $\hsc\in\hcol$.
\end{enumerate}
\end{definition}

That is, we view a learning algorithm $\algo$ as a choice correspondence defined over $\hcol$ (see Figure \ref{fig:learning-structure}). 
Firstly, nonresponsiveness requires that the algorithm does not hallucinate the solutions out of thin air.\footnote{Since $\hcol$ consists of only non-empty subsets of $\hc$, the nonresponsiveness condition can be omitted without loss of generality.}
Secondly, the properness condition presupposes that the solutions in $\algo(\hsc)$ belong to the hypothesis class $\hsc$ that was originally given to the algorithm.
We call $\algo$ \emph{proper} (resp. \emph{improper}) algorithm if it satisfies (resp. violates) this condition; see also \citet[Remark 3.2]{Shalev-Shwartz14:UML}. 
Further restriction can be made so that any $\algo(\hsc)$ must be a unit set, with only one hypothesis chosen from $\hsc$, but we stick to the more general setting throughout.
Lastly, the conclusiveness condition requires that $\algo$ cannot remain inconclusive in the sense that the algorithm must produce something when learning on a valid hypothesis class.
Nevertheless, it is not impossible to deliberately construct a learning algorithm where $\mathbb{A}(\mathcal{H}) = \emptyset$ for some $\mathcal{H}$. This requirement can be relaxed without altering our main results, though at the cost of additional technical challenges.

One can broadly understand the set $\algo(\hsc)$ as a subset of solutions that can potentially be chosen from $\hsc$. 
An arbitrary hypothesis $f$ is considered \emph{as good as} other hypotheses in $\hsc$ if $f\in\algo(\hsc)$. We say that $f$ is \emph{better than} $g$ with respect to $\hsc$ if $f\in\algo(\hsc)$, but $g\notin\algo(\hsc)$. If $f,g\in\algo(\hsc)$, 
then they are both considered as \emph{equally good} by the algorithm. 
For example, the most ubiquitous algorithm in machine learning is a risk minimizer which can be expressed mathematically as $\algo(\hsc) = \{h\in\hsc\,:\, r(h) \leq r(g), \forall g\in\hsc\}$ for $\hsc\in\hcol$ and some real-valued risk functional $r: \hc\to\rr$. In this case, $f$ is at least as good as $g$ if $r(f) \leq r(g)$ and is better than $g$ if $r(f) < r(g)$. They are equally good if $r(f) = r(g)$. 
Nevertheless, by abstracting away the internal learning procedure, the learning structure encompasses more than minimising risk function; it can entail more involved learning procedures that are prevalent in heterogeneous environments such as aggregating, open-sourcing, and trading algorithmic models, for example.
We provide further discussion in \Cref{sec:homogeneous} and \Cref{sec:risk-minimizers}.

To summarize, we view learning as a choice problem on a hypothesis space $\hsp$ of which traditional risk minimization is a special case (see \cref{rem:risk-minimizers}).
Given possibly infinite choices of solutions in $\hsp$, the learning structure $(\hc,\hcol,\algo)$ describes a specific way in which the solutions could possibly be learned from $\hsp$. 
The model class $\hc$ consists of feasible hypotheses generated by a specific model, e.g., deep neural networks or kernel machines, whereas the algorithm $\algo$ corresponds to a learning procedure for all non-empty subsets in $\hcol$, e.g., stochastic gradient descent (SGD) or convex optimization methods.
The collection $\hcol$, which depends jointly on $\hc$ and $\algo$, should be thought of as an exhaustive listing of all the learning problems that can conceivably be posed to the learning algorithm $\algo$.
The Axiom of Choice (AC) allows us to reason about existence of $\algo$ for any $\hcol$ \citep{Zermelo04:AC,Moore82:AC}.
Therefore, while $\hsp$ is determined by the learning problem at hand, $(\hc,\hcol,\algo)$ depends entirely on how the ML researchers would solve it.

\begin{figure}[t!]
    \centering
    \resizebox{!}{0.35\textwidth}{
    \begin{tikzpicture}
        \coordinate (F) at (5,0);

        \draw [fill = gray, line width=1.5pt, opacity=0.1] (5,0) ellipse (3.5cm and 2.5cm);

        \draw [ultra thick] (5,0) ellipse (3.5cm and 2.5cm) node[above, yshift=1.8cm, xshift=2.7cm] {$\hsp$};

        \shade[ball color = yellow, opacity = 0.2] (5,0) circle [radius=2cm];
        \shade[ball color = green, opacity = 0.3] (5,-0.7) circle [radius=1cm]node[above,opacity=1,xshift=-0.4cm,yshift=-0.2cm] {$\hsc_3$};;
        \shade[ball color = green, opacity = 0.3] (5.7,0) circle [radius=0.9cm]node[above,opacity=1,xshift=0.4cm] {$\hsc_2$};
        \shade[ball color = green, opacity = 0.3] (4.5,0.5) circle [radius=0.9cm]node[above,opacity=1,xshift=-0.2cm] {$\hsc_1$};

        \draw (F) circle [very thick, radius=2cm] node[above, yshift=1.8cm, xshift=1cm] {$\hc$};
        
        \shade[ball color = red, opacity = 0.3] (5,-1.2) circle [radius=0.4cm]node[opacity=1] {$h^*$};

        \path [draw,ultra thick] (-1.8,-1.7) rectangle (-0.2,-0.1);
        \node (S) at (-1,-0.9) {\Large $\algo$};
        \draw [dashed,->,line width=1pt] (4,-0.5) |- (-0.1,-0.5);
        \draw [dashed,->,line width=1pt] (S)++(0.9,-0.3) |- (4.6,-1.2);
        
        \node at (0,2) {$\hcol=\{\hsc_1,\hsc_2,\hsc_3\}$};
        \node at (0,1.2) {$\algo(\hsc_3)=\{h^*\}$};
        
    \end{tikzpicture}}
    \caption{A learning structure $(\hc,\hcol,\algo)$. For each $\hsc\in\hcol$, the set $\algo(\hsc)$ consists of the optimal hypotheses $h^*$ learned by the algorithm $\algo$. The hypothesis space $\hsp$ is determined by the learning problem at hand, whereas the learning structure $(\hc,\hcol,\algo)$ is a design choice.}
    \label{fig:learning-structure}
\end{figure}


\subsection{Model Selection Structure}
\label{sec:model-selection-structure}

Machine learning relies extensively on a model selection strategy whereby the ML engineers decide based on domain knowledge or historical data which model class, learning algorithm, and hyperparameter values are most suitable for the problem at hand. 
In other words, it involves choosing from a space of all possible configurations of model classes and learning algorithms. 
Hence, it is a \emph{learning problem over learning structures} as we formalize below.

Let $\Omega$ and $\Xi$ be arbitrary (countable or uncountable) index sets. We formally define a collection of model classes as $\{\hc_\omega\,:\,\omega\in\Omega\}$ where $\hc_\omega\subset\hsp$ are the model classes indexed by $\omega\in\Omega$ and a collection of learning algorithms as $\{\algo_\xi\,:\, \xi\in\Xi\}$ where $\algo_\xi$ are the learning algorithms indexed by $\xi\in\Xi$.
Subsequently, we define a collection of learning structures with respect to the hypothesis space $\hsp$ and the index sets $\Omega,\Xi$ as
\begin{equation}
    \label{eq:ls-collection}
    \lcol{\Omega}{\Xi} := \{(\hc_\omega,\hcol,\algo_\xi)\,:\, \omega\in\Omega, \xi\in\Xi\}.
\end{equation}
To avoid cluttered notation, we simply write $B$ in \eqref{eq:ls-collection} although it depends on both $\omega$ and $\xi$.
A model selection structure can then be defined as a learning structure over a collection of more elementary learning structures.

\begin{definition}
    \label{def:model-selection-structure}
    A model selection structure is a triple $(\lcol{\Omega}{\Xi},\lscol,\msel)$ where $\lcol{\Omega}{\Xi}$ is a collection of learning structures \eqref{eq:ls-collection} indexed by arbitrary index sets $\Omega,\Xi$ and $\lscol$ is a collection of non-empty subsets of $\lcol{\Omega}{\Xi}$. A model selection strategy is a mapping $\msel:\lscol \to \lcol{\Omega}{\Xi}$ that fulfills the nonresponsiveness, properness, and conclusiveness properties.
\end{definition}

The interpretation of \Cref{def:model-selection-structure} is analogous to that of  \Cref{def:learning-rule}, except that the model selection strategy is defined as a choice function, i.e., any $\msel(\mathcal{E})$ where $\mathcal{E}\in C$ must be a unit set, with only one learning structure chosen from $\text{LS}(\Omega,\Xi)$. That is, model selection is itself a learning problem in which the hypothesis class consists of primitive learning structures.

We interpret the index sets $\Omega$ and $\Xi$ in the broadest sense as sets of hyperparameters associated with the model class $\hc$ and the learning algorithm $\algo$ that generate all conceivable learning structures $\lcol{\Omega}{\Xi}$. 
The collection $\lscol$, which depends jointly on $\lcol{\Omega}{\Xi}$ and $\msel$, should be thought of as an exhaustive listing of all the model selection problems that can conceivably be posed to the model selection strategy $\msel$. 
For instance, each set in $\lscol$ may consists of learning structures that lie within a specific range of hyperparameter values that the domain experts can possibly come up with; see \Cref{sec:homogeneous} for more concrete examples. 

\subsection{A Two-Stage Model of Machine Learning}
\label{sec:two-stage-model}

Both learning structure and model selection structure introduced in the previous two sections constitute a positive model of a typical machine learning pipeline.
The entire pipeline can be summarized in the following diagram:
\begin{equation}\label{eq:two-stage-model}
\hsp \to 
\rlap{$\overbrace{(\lcol{\Omega}{\Xi},\lscol,\msel) \to \msel(\mathcal{E}), \mathcal{E}\in\lscol \to (\hc_{\omega^*},\hcol,\algo_{\xi^*})}^{\text{{Stage I: Model Selection}}}$}
(\lcol{\Omega}{\Xi},\lscol,\msel) \to \msel(\mathcal{E}), \mathcal{E}\in\lscol \to 
\underbrace{(\hc_{\omega^*},\hcol,\algo_{\xi^*}) \to \algo_{\xi^*}(\hsc), \hsc\in\hc_{\omega^*}}_{\text{Stage II: Model Training}}.
\end{equation}
Starting with the hypothesis space $\hsp$, the first stage involves creating a blueprint of model class $\hc$ and learning algorithm $\algo$, giving rise to a collection of learning structures $\lcol{\Omega}{\Xi}$ from which the most desirable one is chosen, i.e., by choosing the best hyperparameter values $\omega^*$ and $\xi^*$ via a model selection procedure $\msel$.
The second stage then involves implementing the algorithm $\algo_{\xi^*}$ to choose the best hypotheses from $\hc_{\omega^*}$, e.g., by training a deep neural network until convergence. 
This two-stage model provides a succinct description for most of the existing machine learning algorithms however complex they are. 
\Cref{fig:ml-pipeline} illustrates this two-stage model.\footnote{To put it in modern machine learning parlance, model selection in \eqref{eq:two-stage-model} can be replaced with \emph{model pre-training} while model training in \eqref{eq:two-stage-model} can be replaced with \emph{model finetuning}.}

The strategy $\msel$ subsumes both the situations in which one directly chooses from $\lcol{\Omega}{\Xi}$ a specific learning structure based on their prior knowledge and those in which sophisticated model selection procedures are used.
In the latter situations, the strategy $\msel$ often involves a repeated execution of the algorithm $\algo_\xi$ on $\hc_\omega$ and then evaluating the chosen hypotheses based on some real-valued score functions such as, among others, cross-validation (CV) error \citep[Ch. 7]{Hastie09:ESL}, Akaike information criterion (AIC) \citep{Akaike74:AIC}, and Bayesian information criterion (BIC) \citep{Schwarz78:BIC}.
That is, the quality of the learning structure $(\hc_{\omega},\hcol,\algo_{\xi})$ is determined by the quality of its outcome, i.e., the chosen hypotheses. 
Since the learning structure $(\hc_{\omega},\hcol,\algo_{\xi})$ lies at the heart of both stages of machine learning pipeline, we focus on characterizing its properties in  \Cref{sec:internal-consistency}.
 
\begin{remark}\label{rem:context-dependence}
    The proposed two-stage model underlines the context-dependent nature of machine learning algorithms. 
    The hypothesis class $\hc_\omega$ can be viewed as the context in which the algorithm $\algo_\xi$ operates. This model allows learning algorithms $\algo$ to behave differently in different contexts depending on the choice of $\xi$.
    Interestingly, this has a close connection to a dynamic choice problem \citep{Kreps79:DyChoice} and context-dependent learning \citep{Tversky93:Context-Pref,Pfannschmidt22:LearningChoice} in mainstream economics. 
    It involves one's choices that are spread out over time.
\end{remark}

\begin{figure}[t!]
    \centering
    \resizebox{0.9\textwidth}{!}{
    \begin{tikzpicture}
        \coordinate (F) at (1,0);
        \coordinate (S) at (-7.8,0.5);
        \coordinate (G) at (-11.5,0);
        \coordinate (H) at (8,0);
        \coordinate (L) at (-9,0);
        \coordinate (W) at (-10,-2);

        \shade[ball color = green, opacity = 0.4] (1,0) circle [radius=1.5cm];
        \shade[ball color = green, opacity = 0.2] (-7.8,0.5) circle [radius=2.5cm];
        \shade[ball color = red, opacity = 0.2] (-11.5,0) circle [radius=2cm];
        \shade[ball color = yellow, opacity = 0.2] (-10,-2) circle [radius=1cm];

        \draw [fill = gray, line width=1.5pt, opacity=0.1] (-9,0) ellipse (5cm and 3.5cm);
        \draw [ultra thick] (-9,0) ellipse (5cm and 3.5cm) node[above, yshift=3.5cm, xshift=0cm] {\Large $\lcol{\Omega}{\Xi}$};
        
        \draw [dashed, very thick] (S) circle [line width=4pt, radius=2.5cm] node[above, xshift=-2cm, yshift=2cm] {\Large $\mathcal{E}$};
        \draw [thick] (F) circle [thick, radius=1.5cm] node[above, yshift=1.5cm] {\Large $\hc_{\omega_{\diamond}}$};
        \draw [dashed, very thick] (G) circle [thick, radius=2cm] node[above, yshift=2.2cm] {};
        \draw [dashed, very thick] (W) circle [radius=1cm] node[above, yshift=2.2cm] {};
        
        \shade[ball color = red, opacity = 0.4] (G) circle [radius=1cm, xshift=7mm];
        \draw (G) [thick] circle [radius=1cm, xshift=7mm];
        \shade[ball color = red, opacity = 0.4] (G) circle [radius=1cm, xshift=-5mm, yshift=5mm];
        \draw (G) [thick] circle [radius=1cm, xshift=-5mm, yshift=5mm];
        \shade[ball color = red, opacity = 0.4] (G) circle [radius=1cm, xshift=-5mm, yshift=-5mm];
        \draw (G) [thick] circle [radius=1cm, xshift=-5mm, yshift=-5mm];

        \shade[ball color = green, opacity = 0.5] (F) circle [radius=0.9cm, xshift=3mm, yshift=-2mm];
        \shade[ball color = green, opacity = 0.5] (F) circle [radius=0.7cm, xshift=-4mm, yshift=4mm];
        \node at (F)[xshift=3mm, yshift=-6mm] {$\bigstar$};
        \node at (F)[xshift=-4mm, yshift=7mm] {$\bigstar$};

        \shade[ball color = green, opacity = 0.4] (S) circle [radius=0.8cm, xshift=-2mm, yshift=-15mm];
        \draw (S) [thick] circle [radius=0.8cm, xshift=-2mm, yshift=-15mm];
        \shade[ball color = green, opacity = 0.4] (S) circle [radius=1.1cm, xshift=-11mm, yshift=-1mm];
        \draw (S) [thick] circle [radius=1.1cm, xshift=-11mm, yshift=-1mm];
        \shade[ball color = green, opacity = 0.4] (S) circle [radius=1.6cm, xshift=3mm, yshift=2mm];
        \draw (S) [thick] circle [radius=1.6cm, xshift=3mm, yshift=2mm];

        \shade[ball color = green, opacity = 0.5] (S) circle [radius=0.7cm, xshift=-1mm, yshift=7mm];
        \shade[ball color = green, opacity = 0.5] (S) circle [radius=1cm, xshift=5mm, yshift=0mm];
        \node at (S)[xshift=-1mm, yshift=10mm] {$\bigstar$}; 
        \node at (S)[xshift=5mm, yshift=-5mm] {$\bigstar$}; 

        \shade[ball color = yellow, opacity = 0.4] (W) circle [radius=0.5cm, xshift=3mm];
        \draw (W) [thick] circle [radius=0.5cm, xshift=3mm];
        \shade[ball color = yellow, opacity = 0.4] (W) circle [radius=0.5cm, xshift=-3mm, yshift=3mm];
        \draw (W) [thick] circle [radius=0.5cm, xshift=-3mm, yshift=3mm];
        \shade[ball color = yellow, opacity = 0.4] (W) circle [radius=0.5cm, xshift=-3mm, yshift=-3mm];
        \draw (W) [thick] circle [radius=0.5cm, xshift=-3mm, yshift=-3mm];

       \node at (-5,6) {\Large\textbf{Stage I: Model Selection}}; 
       \node at (3.5,6) {\Large\textbf{Stage II: Model Training}}; 

       \node at (-5,5.3) {\Large $(\lcol{\Omega}{\Xi},\lscol,\msel)$}; 
       \node at (3.5,5.3) {\Large $(\hc_{\omega_{\diamond}},\hcol,\algo_{\xi_{\diamond}})$};

       \node at (6.5,0) {\Huge $\bigstar$};
       \node at (-8.5,2.5) {\Large $\hc_{\omega}$}; 
       \node at (1.7,0.9) {$\hsc$}; 
        
        \draw[->,ultra thick] (-4,0) -- node[below,yshift=-0.5em] {\Large $\msel(\mathcal{E}),\mathcal{E}\in\lscol$} (-0.5,0);
        \draw[->,ultra thick] (2.5,0) -- node[below,yshift=-0.5em] {\Large $\algo_{\xi_{\diamond}}(\hsc), \hsc\in \hcol$} (6,0);
        
    \end{tikzpicture}}
    \caption{\textbf{A two-stage model of machine learning}: We model a typical machine learning pipeline as a two-stage choice. Given a hypothesis space $\hsp$, one must first come up with a \emph{model} and a \emph{learning algorithm}. 
    The model induces a collection of hypothesis class $\hc_{\omega}$ parametrized by some hyperparameter $\omega\in\Omega$, while the learning algorithm $\algo_{\xi}$ is a choice correspondence parametrized by $\xi\in\Xi$. The algorithm $\algo_\xi$ prescribes a series of instructions that will be executed to choose the best solutions from the hypothesis class $\hc_{\omega}$ or any subsets thereof. 
    The result of this design process is a \emph{learning structure} $(\hc_{\omega},\hcol,\algo_{\xi})$ where $\hcol$ is a collection of nonempty subsets of $\hc_{\omega}$ for which $\algo_{\xi}(\hsc)\subseteq \hsc$ and $\algo_{\xi}(\hsc)\neq\emptyset$ for all $\hsc\in\hcol$. 
    A model selection structure $(\lcol{\Omega}{\Xi},\lscol,\msel)$ consists of a collection of learning structure (leftmost solid oval), a collection of subsets of $\lcol{\Omega}{\Xi}$ (dashed circles), and a model selection procedure $\msel$.
    In Stage I, the model selection $\msel$ chooses the learning structure $(\hc_{\omega_{\diamond}},\hcol,\algo_{\xi_{\diamond}})$ from a subset $\mathcal{E}$ in $\lscol$ (green solid circle).
    For example, one may choose the best learning structure by either manually setting the values of the hyperparameters (i.e., $\lscol$ is a collection of singletons) or by adopting data-dependent model selection procedures (i.e., $\lscol$ is composed of nontrivial subsets). 
    In Stage II, the choice is delegated subsequently to the learning algorithm $\algo_{\xi_{\diamond}}$ which chooses from the model class $\hc_{\omega}$ or any subsets thereof. The final outcome of this process is a set of optimal solutions denoted in the figure by $\bigstar$.}
    \label{fig:ml-pipeline}
\end{figure}

\subsection{Empirical Risk Minimization}
\label{sec:homogeneous}

In supervised learning, the most influential learning algorithm is an empirical risk minimization (ERM) \citep{Vapnik91:ERM}. 
Let $\mathcal{X}$ and $\mathcal{Y}$ be non-empty input and output spaces, respectively, $X$ and $Y$ be random variables taking values in $\mathcal{X}$ and $\mathcal{Y}$.
Furthermore, the input space $\mathcal{X}$ is often assumed a Polish space, i.e., a separable and completely metrizable topological space, whereas the output space $\mathcal{Y}$ is a subset of $\rr$.
We denote the realizations of $X$ (resp. $Y$) by $x$ (resp. $y$). 
Given a loss function $\ell: \mathcal{Y}\times\mathcal{Y}\to\mathbb{R}_+$, supervised learning aims to find $f:\mathcal{X}\to\mathcal{Y}$ that minimizes the expected loss 
\begin{equation}\label{eq:expected-risk}
    r(f) = \int_{\mathcal{X}\times\mathcal{Y}} \ell(y,f(x))\,dP(x,y)
\end{equation}
computed with respect to some fixed, but unknown probability distribution $P(X,Y)$ on $\mathcal{X}\times\mathcal{Y}$. 
For example, in animal species categorization, the input $x$ is a photo from a camera trap and the label $y$ corresponds to one of the animal species \citep{Beery18:WildLife,Beery20:iWildCam}. 
Accurate species classification from camera traps can help ecologists better understand wildlife biodiversity and monitor endangered species.

In practice, we only observe an independent and identically distributed (i.i.d.) sample $(x_i,y_i)_{i=1}^m$ of size $m$ from $P(X,Y)$.
The ERM framework, which is the backbone of supervised learning, approximates the expected loss \eqref{eq:expected-risk} by a \emph{regularized} empirical counterpart 
\begin{equation}\label{eq:erm-pre}
    \hat{r}(h) = \frac{1}{m}\sum_{i=1}^m\ell(y_i,h(x_i)) + \lambda\Lambda(\|h\|_{\hc})
\end{equation}
where $\Lambda(\cdot)$ is a monotonically increasing function, $\lambda$ is a positive real-valued regularization constant, and $\hc$ is a hypothesis class.
The second term on the right-hand side of \eqref{eq:erm-pre} is a regularization term introduced to avoid overfitting.
If $\lambda$ decays at the right rate with the sample size $m$, it follows from the law of large number (LLN) that, for each $h\in\hc$, $\hat{r}(h)$ converges in probability to $r(h)$. 
Hence, ERM chooses the best hypotheses from $\hc$ by minimizing the empirical risk \eqref{eq:erm-pre}.  
The generalization capability of the minimizer of \eqref{eq:erm-pre} to unseen data (e.g., unseen camera trap photos) has been studied extensively under the i.i.d. assumption and has been shown to be governed by the uniform consistency of $\hat{r}(h)$ over $\hc$ \citep{Vapnik98:SLT,Cucker02:SLT}.

The learning structures $\{(\hc_\omega,\hcol,\algo_\xi)\,:\, \omega\in\Omega, \xi\in\Xi\}$ associated with the above supervised learning problem can be expressed as follows. 
Firstly, the hypothesis space $\hsp$ consists of all functions from $\mathcal{X}$ to $\mathcal{Y}$. Another common hypothesis space is the space of real-valued square-integrable functions on $\mathcal{X}$ with respect to a measure $\mu$, i.e., $L^2(\mathcal{X},\mu)$.
The index set $\Omega$ consists mainly of different choices of the norm $\|\cdot\|_{\hc_{\omega}}$ used as a regularizer in \eqref{eq:erm-pre} and $\hc_{\omega}$ consists of functions $h$ in $\hsp$ whose norm $\|h\|_{\hc_{\omega}}$ is well-defined, whereas $\Xi$ consists of the choices of $\xi = (\ell, \lambda, \Lambda)$, and other relevant hyperparameters such as optimizers and learning rate. 
Lastly, the learning algorithm can be written as 
$\algo_{\xi}(\hsc) = \{h \in \hsc\,:\, \hat{r}(h) \leq \hat{r}(g), \forall g\in\hsc \}$ for $\hsc\in\hcol$ where $\hcol$ is a collection of non-empty subsets of $\hc_{\omega}$.
The ERM belongs to a class of algorithms known as risk minimizers (RM); see \Cref{rem:risk-minimizers}.

In what follows, we provide more concrete examples of various learning structures.

\begin{example}[Kernel machines]
Kernel machines such as support vector machine (SVM) and kernel ridge regression (KRR) learn by choosing solutions from a reproducing kernel Hilbert space (RKHS) \citep{aronszajn50reproducing,Scholkopf01:LKS}. 
Suppose that $\mathcal{Y}\subseteq\rr$, the model class $\hc_{\omega}$ is a Hilbert space consisting of real-valued functions on $\mathcal{X}$ with $\langle \cdot,\cdot\rangle_{\hc_{\omega}}$ and $\|\cdot\|_{\hc_{\omega}}$ being its inner-product and norm, respectively. It is called a reproducing kernel Hilbert space (RKHS) if there exists a symmetric function $k:\mathcal{X}\times\mathcal{X}\to\rr$, called a reproducing kernel of $\hc_{\omega}$, satisfying the following properties:
\begin{enumerate}
    \item For all $x\in\mathcal{X}$, we have $k(x,\cdot)\in\hc_{\omega}$ where $k(x,\cdot)$ is the function of the second argument with $x$ being fixed such that $x' \mapsto k(x,x')$.
    \item For all $f\in\hc_{\omega}$ and $x\in\mathcal{X}$, we have $f(x) = \langle k(x,\cdot), f \rangle_{\hc_{\omega}}$. This is known as the reproducing property of $\hc_{\omega}$.
\end{enumerate}
Since $\hc_{\omega}$ is uniquely determined by its reproducing kernel $k$ \citep{aronszajn50reproducing}, the choice of $\hc_{\omega}$ boils down to the choice of $k$.
Popular kernels on $\mathcal{X}\subseteq\rr^d$ include linear kernels $k(x,x')=x^{\top}x'$, polynomial kernels $k(x,x') = (x^{\top}x' + c)^p, c > 0, p\in\mathbb{N}_+$, Gaussian kernels $k(x,x') = \exp(-\|x-x'\|_2^2/2\sigma^2), \sigma > 0$, and Laplace (or more generally Mat\'{e}rn) kernels $k(x,x') = \exp(-\|x-x'\|_2/2\sigma^2), \sigma > 0$. More details on kernel machines can be found in \citet{Scholkopf01:LKS}, \citet{HofSchSmo08}, and \citet{Muandet17:KME}, for example.

For kernel machines, the index set $\Omega$ comprises the choice of kernel functions $k$ and their associated hyperparameters, e.g., $c$, $p$, and $\sigma$, which are typically chosen by the model selection procedure (Stage I in \eqref{eq:two-stage-model} and \Cref{fig:ml-pipeline}).
In Stage II, we can express the model training mathematically as 
\begin{equation}\label{eq:kernel-machines}
    \algo_{\xi}(\hsc) = \{h\in\hsc \,:\, \hat{r}(h) \leq \hat{r}(g), \forall g\in\hsc\}
\end{equation}
for $\hsc\subseteq\hc_{\omega}$ where $$\hat{r}(f) = \frac{1}{m}\sum_{i=1}^m\ell(y_i,f(x_i)) + \lambda\|f\|^2_{\hc_{\omega}}.$$
That is, the learning algorithm seeks hypotheses in $\hsc$ that attain the minimum risk.
The index set $\Xi$ thus comprises the choice of loss functions $\ell$, regularization parameter $\lambda$, optimizers, and random seeds among others.
For instance, the loss function for SVM is the hinge loss $\ell(y,y') = \max(0,1-y\cdot y')$ with $y,y'\in\{-1,+1\}$, whereas the square loss $\ell(y,y') = (y-y')^2$ where $y,y'\in\rr$ is commonly used in KRR. 
\end{example}

\begin{example}[Deep learning]
Deep learning has gained recent popularity due to its effectiveness in practice \citep{Lecun15:DL,Goodfellow16:DL}. 
In deep learning, the model class $\hc_{\omega}$ is parameterized by a feedforward neural network (or multilayer perceptrons) and is composed of functions 
\begin{equation}\label{eq:feedforward}
    f(x) := g\circ f_k\circ\cdots\circ~f_2\circ~f_1(x).
\end{equation}
Each $f_i$ is a composed multivariate function:
\begin{equation*}
    f_i(z) := \sigma_i\left(W_i^\top z + \bm{b}_i\right)
\end{equation*}
where $\sigma_i:\rr\to\rr$ are real-valued activation functions, $W_i$ are learnable weights, and $\bm{b}_i$ are bias terms. 
The function $g$ is the last-layer transformation such as identity or sigmoid functions.
There exist many choices for the activation function, but the commonly used ones are sigmoid $\sigma(a) = 1/(1+e^{-a})$, tanh $\sigma(a) = (e^a - e^{-a})/(e^a + e^{-a})$, and rectified linear unit (ReLU) $\sigma(a) = \max\{0,a\}$, to name a few. 
Alternatively, one may express \eqref{eq:feedforward} as $f(x) = g(\bm{w}^\top\phi(x;\theta))$ where $\phi(x)$ denotes a feature embedding of $x$ parametrized by a parameter vector $\theta$ and $\bm{w}$ is a learnable parameter vector.

Early work in deep learning heavily aim at incorporating inductive biases through architectural designs of neural networks, resulting in more specialized architectures (and learning algorithms) such as convolutional neural networks (CNN) \citep{Krizhevsky12:ImageNet,Cohen16:Group}, recurrent neural networks (RNN) with feedback connections \citep{Elman90:RNN}, Transformers with attention mechanism \citep{Vaswani17:Attention}, and graph neural networks (GNN) \citep{battaglia2018relational}.
Unlike kernel machines, the choice of $\hc_\omega$ in deep learning is determined by the choice of network architecture.
Hence, the index $\omega$ specifies the configuration of the neural network, e.g., architecture, number of hidden layers, activation functions, etc. 
The most popular DL algorithm is a stochastic gradient descent (SGD) whose aim is to minimize a variant of \eqref{eq:erm-pre} efficiently. 
The index $\xi$ of the learning algorithm $\algo_{\xi}$ therefore consists of hyperparameters such as learning rate, minibatch size, and the number of epoches, regularization scheme, optimizers, initialization, etc.
\end{example}

\subsection{Summary}
\label{sec:algorithmic-choice-discussion}

To summarize, we view learning as a choice problem over a hypothesis space $\hsp$ and introduce the learning structure $(\hc,\hcol,\algo)$ as a blueprint of a learning problem, consisting mainly of a hypothesis class $\hc$ and learning algorithm $\algo$. We then use it to construct the model selection structure $(\lcol{\Omega}{\Xi},\lscol,\msel)$ where $\lcol{\Omega}{\Xi}$ is a collection of primitive learning structures $(\hc_\omega,\hcol,\algo_\xi)$, indexed by $\omega\in\Omega$ and $\xi\in\Xi$, from which the model selection strategy $\msel$ chooses the learning structure. 
Both structures constitute the so-called \emph{two-stage model} of machine learning where the first stage involves choosing the best model class and learning algorithm (e.g., via hyper-parameters tuning), while the second stage involves choosing the best models from the model class using the chosen learning algorithm (e.g., via model training). This provides a holistic view of machine learning as a two-stage context-dependent choice problem \citep{Kreps79:DyChoice,Tversky93:Context-Pref,Pfannschmidt22:LearningChoice}. 


\section{Rational Algorithms}
\label{sec:internal-consistency}

In this section, we impose further restriction known as an \emph{internal consistency} on the learning structure introduced in \Cref{def:learning-rule}.

\begin{definition}[Internal consistency]\label{asmp:coherence} 
    Let $\{(\hc_{\omega},\hcol,\algo_{\xi})\,:\, \omega\in\Omega, \xi\in\Xi\}$ be a collection of learning structures for some index sets $\Omega$ and $\Xi$. Then, it is internally consistent if the following conditions hold true for all $\omega\in\Omega$ and $\xi\in\Xi$: For $\fc,\gc\in\hcol$, 
    \begin{enumerate}[label=(\arabic{enumi})]
        \item \label{itm:alpha} if $h\in \algo_{\xi}(\fc)$ and $h\in\gc\subseteq\fc$, then $h\in\algo_{\xi}(\gc)$; \hfill (Property $\alpha$)
        \item \label{itm:beta} if $h\in\gc$ and, for $\gc\subseteq\fc$, $h\in\algo_{\xi}(\fc)$, then $\algo_{\xi}(\gc) \subseteq \algo_{\xi}(\fc)$. \hfill (Property $\beta$)
    \end{enumerate}
\end{definition}

\begin{figure}[t!]
    \centering
    \begin{subfigure}[c]{0.4\textwidth}
    \centering
    \resizebox{!}{0.9\textwidth}{
    \begin{tikzpicture}
        \coordinate (S) at (-5,0);

        \shade[ball color = yellow, opacity = 0.2] (-5,0) circle [radius=2cm];
        \shade[ball color = green, opacity = 0.3] (-4.5,-0.5) circle [radius=1.2cm];

        \draw (S) circle [line width=4pt, radius=2cm] node[above, yshift=2.2cm] {$\fc$};
        \draw (-4.5,-0.5) circle [very thick, radius=1.2cm] node[above, yshift=1.2cm] {$\gc$};
        
        \node at (-4.7,0) {\Large\faStar};
        \node at (-4.2,-1) {\Large\faCircle};
        
        \node at (-5,-2.5) {$\algo_\xi(\fc)=\{\text{\faStar}\}, \; \algo_\xi(\gc)=\{\text{\faCircle}\}$};
        
        \draw [<-,line width=2pt,gray] (-5.3,0.4) -- (-6.2,1.6);
        
    \end{tikzpicture}}
    \caption{Violation of Property $\alpha$}
    \label{subfig:alpha-violation}
    \end{subfigure}
    \hspace{3em}
    \begin{subfigure}[c]{0.4\textwidth}
    \centering
    \resizebox{!}{0.9\textwidth}{
    \begin{tikzpicture}
        \coordinate (T) at (0,0);

        \shade[ball color = yellow, opacity = 0.2] (0,0) circle [radius=2cm];
        
        \shade[ball color = green, opacity = 0.3] (0.5,-0.5) circle [radius=1.2cm];

        \draw (T) circle [very thick, radius=2cm]node[above, yshift=2.2cm] {$\fc$};
        
        \draw (0.5,-0.5) circle [very thick, radius=1.2cm]node[above, yshift=1.2cm] {$\gc$};

        \node at (0.5,0) {\Large\faStar};
        \node at (0.8,-1) {\Large\faCircle};
        \node at (-1.2,0) {\Large\faSquare};

        \node at (0,-2.5) {$\algo_\xi(\gc)=\{\text{\faStar,\,\faCircle}\}, \; \algo_\xi(\fc)=\{\text{\faStar,\,\faSquare}\}$};        
        
        \draw [->,line width=2pt,gray] (-0.3,0.4) -- (-1.2,1.6);
        
    \end{tikzpicture}}
    \caption{Violation of Property $\beta$}
    \label{subfig:beta-violation}
    \end{subfigure}
    \caption{(\subref{subfig:alpha-violation}) Property $\alpha$ is violated when a contraction of hypothesis class, e.g., by fixing the values of some parameters of the model with the same hyperparameters, can change the behaviour of $\algo_\xi$. Here, $\algo_{\xi}$ chooses \faStar~from $\fc$, but \faCircle~from $\gc \subset \fc$ although it still contains \faStar. (\subref{subfig:beta-violation}) Property $\beta$ is violated when an expansion of the hypothesis class, e.g., by adding more parameters of the model with the same hyperparameters, can change how the optimal hypotheses are chosen by $\algo_{\xi}$. Here, $\algo_{\xi}$ chooses \{\faStar, \faCircle\} from $\gc$, but neglects \faCircle~from its optimal choice when choosing from $\fc \supset \gc$.}
    \label{fig:internal-consistency}
\end{figure}

These two conditions, known as Property $\alpha$ \citep{Chernoff54:Alpha}\citep[Ch. $1^*$]{Sen70:CCSW} and Property $\beta$ \citep[pp. 320]{Sen17:CCSW}, together form one of the interpretations of rationality of choice in economic theory. 
In the context of this work, Property $\alpha$ demands that any hypothesis $h$ that is chosen from $\fc$ must also be chosen from $\gc$, if $\gc$ is a contraction of $\fc$ that also contains $h$. 
For example, suppose that $\fc$ is composed of all polynomials of degree smaller than $p$ and $\gc$ consists of all polynomials of degree smaller than $q$ where $q \leq p$, i.e., $\gc\subseteq\fc$. 
Then, if $\algo_{\xi}$ chooses the polynomial of degree $t < q \leq p$ from $\fc$ as a solution, it must also be chosen again from $\gc$ by $\algo_{\xi}$.
\Cref{subfig:alpha-violation} illustrates a situation that violates Property $\alpha$.
Property $\beta$, albeit less intuitive, can be interpreted as follows. 
If there is the hypothesis $h$ that is chosen by $\algo_{\xi}$ from $\gc$ and subsequently from $\fc$, which is an expansion of $\gc$, then all other hypotheses that are considered \emph{equally good}\footnote{In this work, $h,g \in \hsc$ are considered by $\algo_{\xi}$ as \emph{equally good} in $\hsc$ if and only if $h,g \in \algo_{\xi}(\hsc)$.} to $h$ in $\gc$ must also be chosen by $\algo_{\xi}$ from $\fc$. 
For instance, let $f,g \in \gc$ be polynomials of degree $t$ with different coefficients. 
If $\algo_{\xi}$ chooses both $f$ and $g$ from $\gc$, and $f$ from $\fc$, then it must also choose $g$ from $\fc$.
Unlike the first condition, it characterizes the behaviour of the algorithm under the expansion of hypothesis space. \Cref{subfig:beta-violation} illustrates a situation in which this property is violated.

Although rationality of choice has been studied from different angles in economic theory \citep[Ch. A2]{Sen17:CCSW}, its applications to algorithmic choice are scarcer. While other characterizations such as Property $\gamma$ \citep[pp. 318]{Sen17:CCSW}, Property $\delta$ \citep[pp. 320]{Sen17:CCSW}, and choice coherence \citep[Def. 1.1]{Kreps12:MicroEcon} also exist, we chose the $\alpha$ and $\beta$ properties for two reasons. 
Firstly, contraction and expansion of the hypothesis class are the most common scenarios encountered in machine learning. 
For example, adjusting the values of some hyperparameters such as kernel bandwidth and number of hidden nodes will either contract or expand the hypothesis class. 
Secondly, while there exist evidences of scenarios in which human decision may violate internal consistency (see, e.g., \citet{Sen93:InternalConsistency} and \citet[Ch. A2]{Sen17:CCSW} for counterexamples), the lack of internal consistency in machines can be attributed to the use of fundamentally flawed learning algorithms, as discussed further below.

\begin{remark}
    We make the following important observations on \Cref{asmp:coherence}.
    \begin{enumerate}
        \item In \Cref{asmp:coherence}, internal consistency is imposed individually on each learning structure $(\hc_{\omega},\hcol,\algo_{\xi})$, i.e., the behaviour of $\algo_\xi$ with respect to the collection $\hcol$ generated by $\hc_\omega$ and $\algo_\xi$. In other words, we impose no restriction whatsoever on how the algorithms $\algo_{\xi}$ and $\algo_{\xi'}$ belonging to two different learning structures should behave across two distinct subsets of hypothesis classes. In fact, it is often desirable that the learning algorithm is model-class dependent. For example, the algorithm given by \eqref{eq:kernel-machines} will generally depend on the choice of the hypothesis class due to its regularization term.

        \item A learning algorithm that violates \Cref{asmp:coherence} can exhibit a behaviour that may be deemed ``irrational''. For example, it is possible that for three distinct hypotheses $f,g,h$ from $\hc_{\omega}$, $\algo_{\xi}(\{f,g\}) = \{f\}$, but $\algo_{\xi}(\{f,g,h\}) = \{g\}$. That is, $\algo_\xi$ chooses $f$ over $g$ from one model class, but chooses $g$ over $f$ in another where the only distinction between them is the presence of an irrelevant hypothesis $h$. Hence, we will refer to any learning algorithms that satisfy \Cref{asmp:coherence} as ``rational'' learning algorithms.
    \end{enumerate}
\end{remark}

\subsection{Risk Minimizers}
\label{sec:risk-minimizers}

It follows immediately that any risk minimizer (RM) satisfies the internal consistency property.
\begin{proposition}\label{prop:risk-minimizers}
Let $\{(\hc_{\omega},\hcol,\algo_{\xi})\,:\, \omega\in\Omega, \xi\in\Xi\}$ be a collection of learning structures for some index sets $\omega\in\Omega$ and  $\xi\in\Xi$. 
Suppose that $\mathbb{A}_\xi$ is a risk minimizer (RM), i.e., 
\begin{equation}\label{eq:risk-minimizer}
    \algo_\xi(\hsc) = \{ h\in\hsc \,:\, r_\xi(h) \leq r_\xi(g) \;\text{ for all }\; g \in \hsc \}, \quad \hsc\in\hcol
\end{equation}
for some real-valued risk functional $r_\xi:\hc_\omega\to\rr$, $\xi\in\Xi$. Then, $\{(\hc_{\omega},\hcol,\algo_{\xi})\,:\, \omega\in\Omega, \xi\in\Xi\}$ satisfies internal consistency.
\end{proposition}

\begin{proof}[Proof of Proposition \ref{prop:risk-minimizers}]
    (Property $\alpha$): For any $\fc,\gc\in\hcol$ such that $\gc \subseteq \fc$, let $h\in\algo_\xi(\fc)$. Thus, $r_\xi(h) \leq r_\xi(g)$ for all $g\in\fc$. Assume that $h\in\gc$ but $h\notin \algo_\xi(\gc)$. This implies that there exists another hypothesis $f\in\gc$ for which $r_\xi(f) < r_\xi(h)$. However, since $f$ is also in $\fc$, it contradicts with $r_\xi(h) \leq r_\xi(g)$ for all $g\in\fc$. Hence, $h$ must also be in $\algo_\xi(\gc)$. 
    (Property $\beta$): For any $h,g\in\gc$, let $h,g\in\algo_\xi(\gc)$. Assume that $h\in\algo_\xi(\fc)$ but $g\notin\algo_\xi(\fc)$. This implies that $r_\xi(h) \leq r_\xi(f)$ for all $f\in\fc$ and $r_\xi(g) > r_\xi(h)$, which contradicts the fact that $g\in\algo_\xi(\gc)$. Hence, $g$ must also be in $\algo_\xi(\fc)$. This implies that $\algo_\xi(\gc)\subseteq\algo_\xi(\fc)$.
    Finally, since this holds uniformly for all $\omega\in\Omega$ and $\xi\in\Xi$, a collection of learning structures $\{(\hc_{\omega},\hcol,\algo_{\xi})\,:\, \omega\in\Omega, \xi\in\Xi\}$ satisfies internal consistency.
\end{proof}

A majority of machine learning algorithms such as empirical risk minimization (ERM) \citep{Vapnik91:ERM}, structural risk minimization (SRM) \citep{Shawe-Taylot96:SRM}, and invariant risk minimization (IRM) \citep{Arjovsky19:IRM,Ahuja20:IRM-Games} fall into this category.
The risk minimizers turn a learning problem into an optimization problem, rendering it solvable by efficient optimization algorithms, e.g., a stochastic gradient descent (SGD).
In what follows, we say that $\algo_\xi$ can be represented by $r_\xi$ if it minimizes $r_\xi$.

\begin{remark}\label{rem:risk-minimizers}
A few remarks on risk minimizers follow.
\begin{enumerate} 
    \item A risk functional that represents $\algo_\xi$ is not unique.
    For any strictly increasing function $c:\rr\to\rr$, $r'_\xi(h) = c(r_\xi(h))$ is a new risk functional that also represents $\algo_\xi$. 
    From learning perspective, they are informationally identical, i.e., $\algo_\xi$ and $\algo_{\xi'}$ behave identically.
    Any two algorithms $\algo_\xi$ and $\algo_{\xi'}$ that can be represented by $r_\xi$ and $r_{\xi'}$ respectively are said to be equivalent if such a strictly increasing function exists.

    \item Invariance to strictly increasing functions makes risk minimizers immune to trivial manipulations (e.g., strategic manipulation and adversarial attacks) and perturbations (e.g., unreliable communication over massive networks) of the risk functional. 
    The Invariance Restriction (IR) condition, introduced in \Cref{sec:heterogeneous-learning}, generalizes this property to a set of $n$-tuples of risk functionals.

    \item When $\hc_{\omega}$ is finite, it has been shown that any $\algo_{\xi}$ that satisfies \Cref{asmp:coherence} must be a risk minimizer \eqref{eq:risk-minimizer}; see, e.g., \citet[Prop. 1.2c]{Kreps12:MicroEcon}. However, this is not always the case for an (uncountably) infinite $\hc_{\omega}$, which is a common scenario in machine learning, at least not without further assumptions on the learning structure; see, e.g., \citet[Sec. 1.5]{Kreps12:MicroEcon}. 
    Hence, a learning algorithm that satisfies internal consistency is not necessarily a risk minimizer. 
\end{enumerate} 
\end{remark}

Next, we characterize an internally consistent learning structure in terms of a binary relation over the model class.

\begin{proposition}\label{prop:revealed-preference}
    Let $(\hc_{\omega},\hcol,\algo_{\xi})$ be a learning structure for some index sets $\omega\in\Omega$ and  $\xi\in\Xi$.
    For every pair $f,g \in \hc_\omega$, let $f \succeq_{\algo_\xi} g$ if and only if  $f\in\algo_\xi(\{f,g\})$. Then, the binary relation $\succeq_{\algo_\xi}$ is complete and transitive\footnote{The binary relation $\succeq$ on $\hsc$ is \emph{complete} if for every pair $f$ and $g$ from $\hsc$, either $f\succeq g$ or $g\succeq f$ (or both). It is \emph{transitive} if $f \succeq g$ and $g \succeq h$ implies $f \succeq h$.} if and only if $(\hc_{\omega},\hcol,\algo_{\xi})$ satisfies the internal consistency property. Moreover, for every $\hsc\in\hcol$,
    \begin{equation*}
        \algo_\xi(\hsc) = \algo_{\succeq_{\algo_\xi}}(\hsc) := \{h\in\hsc \,:\, h \succeq_{\algo_\xi} g \text{ for all } g\in\hsc \}.
    \end{equation*}
\end{proposition}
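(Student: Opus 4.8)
The plan is to prove the three assertions in sequence: completeness of $\succeq_{\algo}$, transitivity of $\succeq_{\algo}$, and the identity $\algo(\hc) = \algo_{\succeq_{\algo}}(\hc)$. Throughout I would use the standing assumption, noted after Definition~\ref{def:learning-rule}, that $\{f\}$, $\{f,g\}$, and $\{f,g,h\}$ all lie in $\hcol$ for any $f,g,h\in\U$, so that $\succeq_{\algo}$ and all the choice sets below are well defined.

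Completeness is immediate from the fact that $\algo$ is nonempty-valued: for any $f,g\in\U$ the set $\algo(\{f,g\})$ is a nonempty subset of $\{f,g\}$, hence contains $f$ or $g$, i.e.\ $f\succeq_{\algo} g$ or $g\succeq_{\algo} f$. (Reflexivity, though not strictly needed, follows likewise from $\algo(\{f\})=\{f\}$.)

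Transitivity is the heart of the argument. Given $f\succeq_{\algo} g$ and $g\succeq_{\algo} h$, after dispensing with the trivial cases in which two of the three hypotheses coincide, I would set $T:=\{f,g,h\}\in\hcol$ and argue by ``propagating membership up the chain $h\to g\to f$.'' Since $\algo(T)\neq\emptyset$, some element of $T$ belongs to $\algo(T)$. If $h\in\algo(T)$, then Property~$\alpha$ gives $h\in\algo(\{g,h\})$; combined with $g\in\algo(\{g,h\})$ (the hypothesis $g\succeq_{\algo} h$), Property~$\beta$ applied with $\{g,h\}\subseteq T$ and witness $h$ yields $\algo(\{g,h\})\subseteq\algo(T)$, so $g\in\algo(T)$. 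An identical step using $\{f,g\}\subseteq T$ and the hypothesis $f\succeq_{\algo} g$ then forces $f\in\algo(T)$. Hence in every case $f\in\algo(T)$, and one last application of Property~$\alpha$ to $\{f,h\}\subseteq T$ gives $f\in\algo(\{f,h\})$, i.e.\ $f\succeq_{\algo} h$. Getting this three-element bookkeeping right — in particular invoking $\alpha$ and $\beta$ in the correct order and with the correct witnesses — is the step I expect to require the most care.

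For the representation identity I would prove the two inclusions separately. The inclusion $\algo(\hc)\subseteq\algo_{\succeq_{\algo}}(\hc)$ is a direct consequence of Property~$\alpha$: if $h\in\algo(\hc)$ and $g\in\hc$ is arbitrary, then $\{h,g\}\subseteq\hc$ forces $h\in\algo(\{h,g\})$, i.e.\ $h\succeq_{\algo} g$, and since $g$ was arbitrary, $h\in\algo_{\succeq_{\algo}}(\hc)$. For the reverse inclusion, take $h\in\algo_{\succeq_{\algo}}(\hc)$ and choose any $g^*\in\algo(\hc)$ (nonempty); by the inclusion just established $g^*\succeq_{\algo} h$, while $h\succeq_{\algo} g^*$ by assumption, so $\algo(\{h,g^*\})=\{h,g^*\}$. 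Property~$\beta$ with $\{h,g^*\}\subseteq\hc$ and witness $g^*\in\algo(\hc)$ then gives $\algo(\{h,g^*\})\subseteq\algo(\hc)$, whence $h\in\algo(\hc)$ (the case $h=g^*$ being trivial). This last part is short once the $\alpha/\beta$ manipulations are under control, so the proposition really comes down to the triple argument for transitivity.
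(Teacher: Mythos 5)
Your proposal is correct and follows essentially the same route as the paper: completeness from nonemptiness of $\algo(\{f,g\})$, transitivity by showing $f\in\algo(\{f,g,h\})$ via a case analysis that propagates membership up the chain with Property $\beta$ and then contracts with Property $\alpha$, and the representation identity by the same $\alpha$/$\beta$ pair of inclusions. Your write-up is merely more explicit about the case bookkeeping (and about the standing assumption that one-, two-, and three-element subsets lie in $\hcol$), which the paper compresses into ``hence, we have $f\in\algo(\{f,g,h\})$ in any case.''
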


\begin{proof}[Proof of Proposition \ref{prop:revealed-preference}]
    ($\Rightarrow$) The conclusiveness property implies that $\algo_\xi(\{f,g\})\neq\emptyset$. Hence, $\succeq_{\algo_\xi}$ is complete. If $f\succeq_{\algo_\xi}g$ and $g\succeq_{\algo_\xi}h$, then $f\in\algo_\xi(\{f,g\})$ and $g\in\algo_\xi(\{g,h\})$. By Property $\beta$, if $g\in\algo_\xi(\{f,g,h\})$, then $f\in\algo_\xi(\{f,g,h\})$. Also, if $h\in\algo_\xi(\{f,g,h\})$, then $g\in\algo_\xi(\{f,g,h\})$. Hence, we have $f\in\algo_\xi(\{f,g,h\})$ in any case. By Property $\alpha$, $f\in\algo_\xi(\{f,h\})$ and $f\succeq_{\algo_\xi}h$, which shows that $\succeq_{\algo_\xi}$ is transitive.
    ($\Leftarrow$) Let $\succeq_{\algo_\xi}$ be complete and transitive. 
    By completeness of $\succeq_{\algo_\xi}$, $f\in\algo_{\xi}(\{f,g\})$ or $g\in\algo_{\xi}(\{f,g\})$. By transitivity of $\succeq_{\algo_\xi}$, if $f\succeq_{\algo_\xi}g$ and $g\succeq_{\algo_\xi}h$, then $f\succeq_{\algo_\xi}h$. 
    This implies that $f\in\algo_{\xi}(\{f,g\})$, $g\in\algo_{\xi}(\{g,h\})$, and $f\in\algo_{\xi}(\{f,h\})$. Then, if $f\in\algo_{\xi}(\{f,g,h\})$, we have both $f\in\algo_{\xi}(\{f,g\})$ and $f\in\algo_{\xi}(\{f,h\})$, which shows that $\algo_{\xi}$ satisfies Property $\alpha$. Next, suppose that $\{f,g\}=\algo_{\xi}(\{f,g\})$ and $\{f,h\}=\algo_{\xi}(\{f,g,h\})$, implying that $\algo_{\xi}$ violates Property $\beta$. Since $h\in\algo_{\xi}(\{f,g,h\})$, it follows from Property $\alpha$ that $h\in\algo_{\xi}(\{f,h\})$, implying that $h\succeq_{\algo_\xi} f$. However, this can create an intransitive relation $f\succeq_{\algo_\xi} g \succeq_{\algo_\xi} h \succeq_{\algo_\xi} f$. Hence, $\algo_{\xi}$ must satisfy Property $\beta$.

    Finally, we show that $\algo_\xi(\hsc) = \algo_{\succeq_{\algo_\xi}}(\hsc)$ for every $\hsc\in\hcol$. Assume that $f\in\algo_\xi(\hsc)$. By $\alpha$ Property, we have for every $g\in\hsc$ that $f\in\algo_\xi(\{f,g\})$. This implies that $f\succeq_{\algo_\xi}g$ and thus $f\in\algo_{\succeq_{\algo_\xi}}(\hsc)$. Now, let us assume that $f\neq g$, $f\in\algo_{\succeq_{\algo_\xi}}(\hsc)$, and $g\in\algo_\xi(\hsc)$. Then, $f\in\algo_\xi(\{f,g\})$ and by $\beta$ Property, $f\in\algo_\xi(\hsc)$, which completes the proof.
\end{proof}

The binary relation $\succeq_{\algo_\xi}$ is known as a \emph{revealed preference} of $\algo_\xi$ and this proposition implies that as long as $\algo_\xi$ satisfies internal consistency, $\algo_\xi(\hsc)$ for any $\hsc\in\hcol$ coincides with those obtained from the learning algorithm $\algo_{\succeq_{\algo_\xi}}$ defined in terms of preferences that are revealed by $\algo_\xi$ operated on one- and two-element subsets of hypotheses; see, e.g., \citet{Sen71:RevealedPref} for details on revealed preference. With this structure, there is a dualism between preference and choice from binary menus, i.e., $f \succeq_{\algo_\xi} g$ if and only if $f$ is chosen from $\{f,g\}$ by ${\algo_\xi}$.

\subsection{Summary}
\label{sec:rationality-discussion}

Representing learning algorithms as choice correspondences broadens our analyses to include scenarios involving moral philosophy, value judgments, and cases where the objective function cannot be represented numerically.
In this section, we impose the internal consistency property, a minimal requirement that distinguishes desirable learning algorithms from undesirable ones. This requirement's importance in machine learning can be seen from two perspectives.
First, as noted in \Cref{rem:risk-minimizers}, internally consistent algorithms behave like risk minimizers, the most popular class of learning algorithms, aligning with community standards. 
Second, \Cref{prop:revealed-preference} implies that algorithms violating internal consistency exhibit an incomplete or intransitive (or both) preference relation, indicating their fundamental design flaws.

\section{Collective Learning}
\label{sec:heterogeneous-learning}

Modern applications of AI involve training and deploying machine learning models across heterogeneous and potentially massive environments.\footnote{Throughout this paper, the term ``environments'' is used colloquially to refers to potential sources of heterogeneous data, e.g., individuals, demographic groups, organizations, tasks, mobile phones, stakeholders, modalities, etc.}
In multi-task learning, we have access to data from different tasks and our goal is to design a learning algorithm that can leverage information sharing across tasks.
In domain generalization and OOD generalization, the setting is similar, but the ultimate goal is to learn models that generalize well to previously unseen environments. 
In federated learning, we aim to optimize the model performance over potentially massive network of remote devices under some real-world constraints such as privacy, security, and access rights.
In collaborative learning, we have access to data that come from multiple stakeholders and our goal is to design an  algorithm that can not only learn, but also incentivize them to collaborate, e.g., by truthfully sharing their proprietary data.
In algorithmic fairness, the heterogeneity of data originates from varied effects of an algorithmic model across different demographic groups. 
Hence, equitability of learning algorithms as well as the resulting models is an important criterion that the ML researchers must also take into account.
In multi-modal learning, our goal is to optimize the model performance across multiple modalities such as images, text, and audios.

\begin{figure}[t!]
    \centering
    \resizebox{0.8\textwidth}{!}{
    \begin{tikzpicture}
        \coordinate (F) at (4,0);
        \coordinate (S) at (-4,0);

        \shade[ball color = yellow, opacity = 0.2] (4,0) circle [radius=2cm];
        \shade[ball color = green, opacity = 0.2] (-4,0) circle [radius=2cm];

        \draw (S) circle [line width=4pt, radius=2cm] node[above, yshift=2.2cm] {Environment(s)};
        \draw (F) circle [very thick, radius=2cm]node[above, yshift=2.2cm] {Hypothesis Class $\hsc\in\hcol$};
        
        \draw (3,0.5) circle (2pt) node[below,yshift=-2pt] {$h_0$};
        \draw[red] (5.1,-0.51) node {\Large$\star$};
        \draw[red] (5.3,-0.8) node {$h^*$};
        \filldraw (3.5,1.5) circle (2pt) node[below,yshift=-2pt] {};
        \filldraw (4.5,1.3) circle (2pt) node[below,yshift=-2pt] {};
        \filldraw (5,0.5) circle (2pt) node[below,yshift=-2pt] {};
        \filldraw (3,1.3) circle (2pt) node[below,yshift=-2pt] {};
        \filldraw (4.3,-1.4) circle (2pt) node[below,yshift=-2pt] {};
        \filldraw (3.5,-1.2) circle (2pt) node[below,yshift=-2pt] {};
        \filldraw (3,-0.5) circle (2pt) node[below,yshift=-2pt] {};
        \draw[snake=bumps,segment length=20pt,thick,->] (3.05,0.45) -- node[above,yshift=3pt] {$\algo_{\rp}$} (4.95,-0.45);

        \filldraw (-2.8,0.1) circle (3pt) node[below,yshift=-2pt] {$r_1$};
        \filldraw (-3.5,1.1) circle (3pt) node[below,yshift=-2pt] {$r_3$};
        \filldraw (-4.3,-0.5) circle (3pt) node[below,yshift=-2pt] {$r_4$};
        \filldraw (-5.2,-0.3) circle (3pt) node[below,yshift=-2pt] {$r_5$};
        \filldraw (-4.6,1) circle (3pt) node[below,yshift=-2pt] {$r_n$};
        \filldraw (-3.4,-1) circle (3pt) node[below,yshift=-2pt] {$r_2$};
        
        \draw (-4,-1.5) circle (2pt);
        \draw (-4.8,-1.3) circle (2pt);
        \draw (-3.7,0) circle (2pt);
        \draw (-5.3,0.3) circle (2pt);
        \draw (-4,1.5) circle (2pt);
        
        \node[chef,minimum size=1cm] at (0,2.2) {};
 
        \draw[->,ultra thick, color=blue] (-2,0.5) -- node[above,yshift=0.5em] {$(r_1,\ldots,r_n) \mapsto (\hc,\hcol,\algo_{\rp})$} (2,0.5);
        \draw[->,ultra thick, color=magenta] (2,-0.5) -- node[below,yshift=-0.5em] {$h^* \in \algo(\hsc)$} (-2,-0.5);

    \end{tikzpicture}}
    \caption{In heterogeneous environments, the task of machine learners is to design an aggregation rule that takes a risk profile $(r_1,r_2,\ldots,r_n)$ representing the performance measures of hypotheses across $n$ environments and produces a learning structure $(\hc,\hcol,\algo_{\rp})$. For each hypothesis class $\hsc\in\hcol$, the algorithm $\algo_\rp$ is implemented to choose the best hypotheses from $\hsc$.
    The lower arrow in the figure represents the deployment process.}
    \label{fig:democratic-ml}
\end{figure}

Despite substantially diverse goals, they can be expressed mathematically as an aggregation function
\begin{equation}
    \label{eq:learning-mechanism}
    F: (r_1,r_2,\ldots,r_n) \mapsto (\hc,\hcol,\algo_{\rp}), \quad n \in \mathbb{N},
\end{equation}
where $\rp := (r_1,r_2,\ldots,r_n): \hsp \to \mathbb{R}^n$ denotes a risk profile computed across $n$ distinct environments. 
For each hypothesis $h\in\hsp$, the profile $\mathbf{r}(h)$ corresponds to an $n$-tuple of (empirical) risk functionals evaluated on $h$ where $r_i(h)$ is associated with the $i$-th environment. 
We omit the indexes $\omega$ and $\xi$ from the learning structure $(\hc,\hcol,\algo_{\rp})$ and write $\algo_{\rp}$ to underline that the learning rule uses the risk profile as an input.
Finally, we make no restriction on the domain of $F$, i.e., $F(\mathbf{r})\neq\emptyset$ for all $\mathbf{r}$ and assume further that $F$ is invariant to a permutation of the risk profile.

One can interpret $F$ in \eqref{eq:learning-mechanism} as ways in which machine learners can conceivably blueprint a learning algorithm from the risk profile.
However, its mathematical simplicity does not trivialize the actual learning procedures which can range from a minimization of an aggregated risk, e.g., average risk, in a centralized environment to market mechanisms by which information about data and models must be exchanged between scattered environments.
\Cref{fig:democratic-ml} illustrates this perspective of machine learning.

\subsection{Risk Profile}
\label{sec:risk-profile}

The risk functional \eqref{eq:erm-pre} can be viewed as a process of reducing multidimensional data and prior knowledge into a univariate (cardinal) measure that is easy to optimize.
Hence, the risk profile is the most natural way of eliciting and measuring the expected performance of different hypotheses in the hypothesis space $\hsp$ across $n$ environments. 
Importantly, it captures not only the empirical risk functionals and inductive biases for the learning problems, but also different preferences and incentives across environments.
On the one hand, the risk profile enables the learning algorithms to succinctly compare hypotheses in $\hsp$. 
That is, $f$ is considered at least \emph{as good as} (resp. \emph{strictly better than}) $g$ in the environment $i$ if $r_i(f) \leq r_i(g)$ (resp. $r_i(f) < r_i(g)$).
They are deemed \emph{equally good} if $r_i(f) = r_i(g)$.
On the other hand, such comparisons can be made without revealing or sharing information that constitute the computation of individual risk functionals.
Further, due to an inherent heterogeneity in the nature of observed environments, the risk profile typically exhibits heterogeneity as a result of different sample sizes, loss functions, inductive biases, regularization schemes, user preferences and incentives, for example. This will be elucidated further in \Cref{sec:escaping} and \Cref{sec:implications}.

\subsection{The Axiomatic Method}
\label{sec:axioms}

Statistically speaking, the population risk \eqref{eq:expected-risk} through its approximation \eqref{eq:erm-pre} is the right objective to optimize under the i.i.d. assumption.  
However, in heterogeneous environments, i.e., when $n\geq 2$, it is unclear \emph{what} to optimize.
To deal with this ambiguity, we adopt an axiomatic method and instead define desirable properties of learning algorithms in heterogeneous environments.
We argue that the aggregation function $F$ in \eqref{eq:learning-mechanism} \emph{should} possess the following properties:

\begin{enumerate}
    \item[(PO)] \textbf{Pareto Optimality:} For all $f,g\in\hc$, $r_i(f) < r_i(g)$ for all $i\in[n]$ implies that $\{f\}=\algo_\rp(\{f,g\})$.
    
    \item[(IIH)] \textbf{Independence of Irrelevant Hypotheses:} For any pair of risk profiles $\rp,\rp'$ and any pair of hypotheses $f,g\in\hc$ such that $r_i(f)=r'_i(f)$ and $r_i(g)=r'_i(g)$, $f\in\algo_{\rp}(\{f,g\})$ if and only if $f\in\algo_{\rp'}(\{f,g\})$.
    
    \item[(IR)] \textbf{Invariance Restriction:} For any pair of risk profiles $\mathbf{r},\mathbf{r}'$ for which there exists $(a_1,\ldots,a_n)\in\rr^n$ and $(b_1,\ldots,b_n)\in\rr^n_+$ such that $r_i(h) = a_i + b_ir_i'(h)$ for all $i\in[n]$, $\algo_{\rp}(\hsc) = \algo_{\rp'}(\hsc)$ for all $\hsc\in\hcol$.
    
    \item[(CI)] \textbf{Collective Intelligence:} There exists no $i\in[n]$ such that for all pair $f,g\in\hc$ and for all $\mathbf{r}$ in the domain of $F$, $r_i(f) < r_i(g)$ implies that $\{f\} = \algo_\rp(\{f,g\})$.
\end{enumerate}

First of all, the PO condition requires that the algorithm chooses a non-dominated model $f$ over $g$ if $f$ is strictly better than $g$ in \emph{all} environments. 
It is hard to argue against PO as a desirable property for a learning algorithm.
Surely, if $f$ is unanimously superior to $g$, there is no reason for $\algo_\rp$ to choose $g$ over $f$.
Secondly, the IIH requires that when choosing between two hypotheses, the algorithm should only rely on the relative risks between these two hypotheses across all environments.
Thirdly, the IR demands that the algorithm must be invariant to any transformation of the risk profile that renders it \emph{informationally identical} to the original one.
It generalizes the observation that most learning algorithms in the homogeneous environment are invariant to the strictly increasing transformation of the risk functional (see \Cref{rem:risk-minimizers}), to a set of $n$-tuples of risk functionals.  
Since $r_i$ is informationally identical to $r_i'$ for all $i\in[n]$, the behaviour of the algorithm should remain unchanged.
Lastly, the CI demands that the algorithm should exhibit a collective behaviour in the sense that it leverages information from multiple environments when learning from a hypothesis class $\hc$.

While PO and CI are nearly undisputed, the IIH and IR conditions deserve a scrutiny because they specify which information in the risk profile are deemed ``relevant''. 
The IIH condition primarily emphasizes the importance of \emph{local} information. It says that the algorithm should base its choice over two hypotheses only on their respective risk values, not that of a third hypothesis so long as it leaves the original risk values unchanged. 
While some readers may perceive this condition as overly restrictive, it is challenging (at least from the author's perspective) to envision a machine learning scenario where violating this condition would be desirable. Such a violation would imply that the algorithm is using information beyond the risk profile when comparing different hypotheses. The reliance on irrelevant information can undermine the integrity of the learning system, making it vulnerable to strategic manipulation.

Conversely, the IR condition focuses on the \emph{global} information. It establishes equivalence classes for risk profiles that are informationally identical and instructs the algorithm to treat them as indistinguishable.
As discussed in \Cref{rem:risk-minimizers}, any strictly increasing transformation of the risk function within each environment does not introduce new information that was not already present in the original risk function. 
Consequently, such transformations should be deemed irrelevant.
For example, consider a learning algorithm operating across multiple hospitals. 
The risk profile is computed using patient data obtained from medical equipment which requires a regular re-calibration. However, this re-calibration merely results in a positive affine transformation of the risk profile prior to the calibration.   
The IR condition asserts that the learning algorithm should not be affected by such re-calibration.
While this proposition seems reasonable, it can be hard to enforce it in practice, rendering the algorithm vulnerable to strategic manipulation. 
Another consequence of this requirement is that it restricts the comparability of certain information across environments, implications of which will be further explored in \Cref{sec:escaping}.

In the following, we say that $\algo_\rp$ is \emph{compatible} with the axioms above if the aggregation function $F$ that produces the learning structure $(\hc,\hcol,\algo_\rp)$ satisfies these axioms. 

\subsection{Impossibility of Collective Intelligence}

The following lemma uniquely characterizes the rational learning algorithm that is compatible with the PO, IIH, and IR properties.

\begin{lemma}\label{lem:erm}
For a finite number of two or more environments and at least three distinct hypotheses, a unique learning algorithm (up to permutation of the risk profile) that is internally consistent and is compatible with PO, IIH, and IR simultaneously is of the form  
\begin{equation}\label{eq:erm}
    \algo_\rp(\hsc) = \{h\in\hsc \,:\, r_i(h) \leq r_i(g), \forall g\in\hsc \}, \quad \hsc\in\hcol,
\end{equation}
for some $i\in [n]$.
\end{lemma} 

\Cref{lem:erm} implies that for at least two or more environments, the only rational algorithm that is compatible with PO, IIH, and IR is the ERM, i.e., a risk minimizer that unilaterally optimizes an individual risk function. As it unilaterally learns from a single environment, this algorithm is incompatible with the CI property.\footnote{Under the anonymity of the environments, the ERM algorithm \eqref{eq:erm} is equivalent to what we call a \emph{simultaneous} ERM. This simultaneous ERM independently learns a collection of models by minimizing the risks across all environments without considering any inter-environment information. We thank Gill Blanchard for suggesting this example.}

\begin{theorem}
    \label{thm:impossibility}
    For a finite number of two or more environments and at least three distinct hypotheses, there exists no rational learning algorithm that is compatible with PO, IIH, IR, and CI simultaneously.
\end{theorem}

To summarize, \Cref{lem:erm} and \Cref{thm:impossibility} imply that if we treat PO, IIH, and IR as primitive properties that any learning algorithms in heterogeneous environments must satisfy, the only possibility that can come out of \eqref{eq:learning-mechanism} is an algorithm that unilaterally minimizes an environment-specific risk functional. In other words, it is impossible for machine learners to design learning algorithms that can leverage information across environments unless they are willing to sacrifice at least one of those fundamental properties, namely, internal consistency, PO, IIH, and IR.

It is instructive to understand intuitively how these axioms respectively yield \Cref{lem:erm} and \Cref{thm:impossibility}.
First, we consider all conceivable algorithms defined on $\hsp$. The undesirable ones are then eliminated by demanding that they satisfy some essential properties.
By successively imposing internal consistency, PO, IIH, and IR, the ERM \eqref{eq:erm} in \Cref{lem:erm} remains as the only possibility. 
Adding CI eliminates this only possibility, giving rise to the impossibility result in \Cref{thm:impossibility}.

\subsection{Proofs}
\label{sec:proofs}

This section provides detailed proofs of our main results, which relies heavily on the insights from the original proof of Arrow's General Possibility Theorem \citep{Arrow50:Impossibility} and its simplification in \citet[pp. 286]{Sen17:CCSW}. 

Let $\mathcal{E}$ be a set of environments. 
Suppose that $\algo_\rp$ is internally consistent and is compatible with PO, IIH, and IR. 
Crucial to the proof is the idea of a set $\mathcal{E}$ being ``decisive''.

\begin{definition}[Decisiveness]
\label{def:decisiveness}
    A set of environments $\mathcal{E}$ is said to be \emph{locally decisive} over a pair of hypotheses $f,g$ if $r_e(f) < r_e(g)$ for all $e\in \mathcal{E}$ implies that $\{f\} = \algo_\rp(\{f,g\})$. 
    It is said to be \emph{globally decisive} if it is locally decisive over every pairs of hypotheses.
\end{definition}

The following two intermediate results provide basic properties of decisive set of environments $\mathcal{E}$.

\begin{lemma}
    \label{lem:decisive-spreadness}
    If a set of environments $\mathcal{E}$ is decisive over any pair $\{f,g\}$, then $\mathcal{E}$ is globally decisive.
\end{lemma}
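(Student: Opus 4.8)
The plan is to mimic the classical "field expansion" (or contagion) lemma from proofs of Arrow's theorem, as specialized to this choice-correspondence setting. Assume $\mathcal{E}$ is locally decisive over some fixed pair $\{f,g\}$. I want to show $\mathcal{E}$ is locally decisive over an arbitrary pair $\{x,y\}$. The idea is to introduce a profile in which the environments in $\mathcal{E}$ and those outside it rank a cleverly chosen triple of hypotheses so that (i) decisiveness over $\{f,g\}$ forces a choice between two of them, (ii) Pareto Optimality (PO) forces another comparison, and (iii) internal consistency (Property $\alpha$ and $\beta$, via the revealed-preference relation $\succeq_{\algo}$ of Proposition~\ref{prop:revealed-preference}) chains these together to yield the desired strict choice over $\{x,y\}$. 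Finally, Independence of Irrelevant Hypotheses (IIH) lets me conclude that the outcome on $\{x,y\}$ depends only on the within-environment orderings of $x$ and $y$, so the conclusion transfers to every profile realizing $r_e(x) < r_e(y)$ for all $e \in \mathcal{E}$, not just the constructed one.

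Concretely, I would first handle the case where $\{x,y\}$ is disjoint from $\{f,g\}$ (the generic case) and treat overlaps as minor variants. Work with the triple $\{f, g, x\}$ — actually I would use a richer triple or a two-step argument through an auxiliary hypothesis. Construct a risk profile (using Proposition~\ref{prop:revealed-preference}, it suffices to prescribe the pairwise orderings consistently, i.e.\ a complete transitive order per environment) such that: for every $e \in \mathcal{E}$, the order is $x \prec f \prec g$ (meaning $r_e(x) < r_e(f) < r_e(g)$); for every $e \notin \mathcal{E}$, the order is $f \prec g$ and $f \prec x$ with $x,g$ ordered arbitrarily, say $f \prec g \prec x$ or $f \prec x \prec g$ as convenient. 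Since $\mathcal{E}$ is locally decisive over $\{f,g\}$ and all $e \in \mathcal{E}$ have $r_e(f) < r_e(g)$, we get $\{f\} = \algo(\{f,g\})$. Since all environments (both in and out of $\mathcal{E}$) have $r_e(x) < r_e(f)$, PO gives $\{x\} = \algo(\{x,f\})$. Now apply Proposition~\ref{prop:revealed-preference}: $x \succ_{\algo} f$ and $f \succ_{\algo} g$, and by transitivity of $\succeq_{\algo}$ together with $\algo(\hc) = \algo_{\succeq_{\algo}}(\hc)$, we obtain $x \succ_{\algo} g$, i.e.\ $\{x\} = \algo(\{x,g\})$. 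This shows $\mathcal{E}$ is decisive over the \emph{new} pair $\{x,g\}$; a symmetric construction interchanging roles propagates decisiveness to $\{f,y\}$, and iterating the construction once more (via an intermediate hypothesis, or by composing the two steps with the triple $\{x,y,g\}$) yields decisiveness over $\{x,y\}$. Finally, IIH ensures this conclusion depends only on the within-environment rankings of the relevant pair, so it holds for \emph{every} admissible profile, which is exactly global decisiveness.

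The main obstacle I anticipate is bookkeeping around two subtleties rather than any deep difficulty. First, I must be careful that the constructed per-environment orders are genuinely realizable as risk profiles in the domain of $F$ — but since the domain of $F$ is unrestricted (the paper explicitly states $F(\mathbf{r}) \neq \emptyset$ for all $\mathbf{r}$) and any complete transitive ranking on a finite set of hypotheses is induced by some real-valued assignment, this is routine. Second, and more delicate, is handling the cases where $\{x,y\}$ shares one or both hypotheses with $\{f,g\}$: there I cannot use a fully "fresh" triple and must instead route the contagion argument through an auxiliary hypothesis $z \in \U$ distinct from all of $f,g,x,y$ (here I use the standing assumption that $\{z\}, \{z,\cdot\}, \{z,\cdot,\cdot\} \in \hcol$), reducing the overlapping case to two applications of the disjoint case: $\{f,g\} \leadsto \{f,z\} \leadsto \{x,y\}$. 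Packaging all the overlap cases cleanly — rather than the logical core — is where most of the writing effort will go.
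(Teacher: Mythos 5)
Your proposal is correct and follows essentially the same route as the paper's proof: the classical field-expansion/contagion argument, using PO to pin down the comparisons involving the fresh hypotheses, local decisiveness for the $\{f,g\}$ comparison, transitivity of the revealed preference $\succeq_{\algo}$ (Proposition~\ref{prop:revealed-preference}) to chain them, and IIH to conclude decisiveness over the new pair; the paper does this in a single four-hypothesis chain $p\prec f\prec g\prec q$ whereas you iterate three-hypothesis steps, and you are in fact more explicit than the paper about the overlapping-pair cases. One slip to fix: for environments $e\notin\mathcal{E}$ you wrote the order $f\prec x$, but for PO to yield $\{x\}=\algo(\{x,f\})$ you need $r_e(x)<r_e(f)$ in \emph{all} environments, i.e. $x\prec f$ outside $\mathcal{E}$ as well (your next sentence already assumes this, so it is a typo rather than a gap), and the $x$-versus-$g$ comparison outside $\mathcal{E}$ must be left genuinely unspecified so that IIH attributes the outcome on $\{x,g\}$ to $\mathcal{E}$ alone.
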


\begin{proof}
    Let $\{p,q\}$ be any other pair of hypotheses that is different from $\{f,g\}$. 
    Assume that in every environment $e$ in $\mathcal{E}$, $r_e(p) < r_e(f)$, $r_e(f) < r_e(g)$, and $r_e(g) < r_e(q)$.
    For all other environments $e'$ not in $\mathcal{E}$, we assume that $r_{e'}(p) < r_{e'}(f)$ and $r_{e'}(g) < r_{e'}(q)$ and leave the remaining relations unspecified.
    By PO condition, $\{p\} = \algo_\rp(\{p,f\})$ and $\{g\} = \algo_\rp(\{q,g\})$.
    By the decisiveness of $\mathcal{E}$ over $\{f,g\}$, we have $\{f\} = \algo_\rp(\{f,g\})$.
    Then, it follows from the transitivity implied by Proposition \ref{prop:revealed-preference} that $\{p\} = \algo(\{p,q\})$.
    By IIH condition, this must be related only to the relation between $p$ and $q$.
    Since we have only specified information in $\mathcal{E}$, $\mathcal{E}$ must be decisive over $\{p,q\}$ and for all other pairs. 
    Hence, $\mathcal{E}$ is globally decisive.
\end{proof}

\begin{lemma}
    \label{lem:decisive-contraction}
    If a set of environments $\mathcal{E}$ consists of more than one element and is decisive, then some proper subset of $\mathcal{E}$ is also decisive.
\end{lemma}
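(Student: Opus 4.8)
The plan is to reproduce the classical ``group-contraction'' step from the proof of Arrow's theorem. Since $\mathcal{E}$ is decisive, Lemma \ref{lem:decisive-spreadness} lets us work with it as globally decisive. Split $\mathcal{E}$ into two disjoint nonempty pieces $\mathcal{E}_1$ and $\mathcal{E}_2$ (possible because $|\mathcal{E}|\geq 2$), and fix three distinct hypotheses $x,y,z\in\U$; the standing assumptions on $\hcol$ guarantee that $\{x,y\},\{y,z\},\{x,z\},\{x,y,z\}$ all lie in $\hcol$. Because $F$ has unrestricted domain, I may build a single risk profile $\mathbf{r}$ realizing a cyclic ``Condorcet'' pattern on $\{x,y,z\}$: every $e\in\mathcal{E}_1$ has $r_e(x) < r_e(y) < r_e(z)$, every $e\in\mathcal{E}_2$ has $r_e(z) < r_e(x) < r_e(y)$, and every environment outside $\mathcal{E}$ has $r_e(y) < r_e(z) < r_e(x)$. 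The single feature shared across all of $\mathcal{E}$ is $r_e(x) < r_e(y)$, so global decisiveness of $\mathcal{E}$ over the pair $\{x,y\}$ forces $\{x\}=\algo(\{x,y\})$, hence $x\succ_{\algo} y$ (in particular $\neg(y\succeq_{\algo} x)$).

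Next I would case-split on $\algo(\{x,z\})$, which by completeness of $\succeq_{\algo}$ (Proposition \ref{prop:revealed-preference}) contains $x$ or $z$. If $z\in\algo(\{x,z\})$, i.e.\ $z\succeq_{\algo} x$, then transitivity gives $z\succeq_{\algo} x\succeq_{\algo} y$, so $z\succeq_{\algo} y$, while $y\succeq_{\algo} z$ would (with $z\succeq_{\algo} x$) yield $y\succeq_{\algo} x$, a contradiction; hence $\{z\}=\algo(\{y,z\})$. In $\mathbf{r}$ the only environments with $r_e(z) < r_e(y)$ are exactly those in $\mathcal{E}_2$, so by IIH the choice on $\{y,z\}$ depends only on this ``$\mathcal{E}_2$-versus-the-rest'' split, which makes $\mathcal{E}_2$ decisive over $\{z,y\}$. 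Otherwise $z\notin\algo(\{x,z\})$, so completeness gives $\{x\}=\algo(\{x,z\})$; in $\mathbf{r}$ the only environments with $r_e(x) < r_e(z)$ are those in $\mathcal{E}_1$, so IIH makes $\mathcal{E}_1$ decisive over $\{x,z\}$. In either case a proper subset of $\mathcal{E}$ (namely $\mathcal{E}_2$ or $\mathcal{E}_1$) is decisive over some pair, and Lemma \ref{lem:decisive-spreadness} then upgrades this to global decisiveness, giving the conclusion.

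The routine parts are the construction of $\mathbf{r}$ (any assignment of distinct real values compatible with the three prescribed orderings works) and the short transitivity chase. The point I would be most careful about is the IIH step: I must argue that ``$\{x\}=\algo(\{x,z\})$ for this particular $\mathbf{r}$'' promotes, via Independence of Irrelevant Hypotheses, to decisiveness of $\mathcal{E}_1$ over $\{x,z\}$ in the sense of Definition \ref{def:decisiveness} --- IIH only identifies the choice on $\{x,z\}$ across profiles inducing the same $x$-versus-$z$ comparison in \emph{every} environment, so the cyclic pattern has to be chosen precisely so that the condition that survives is ``$\mathcal{E}_1$ prefers $x$'' (and symmetrically ``$\mathcal{E}_2$ prefers $z$'' for the pair $\{z,y\}$), with Lemma \ref{lem:decisive-spreadness} absorbing any residual dependence on the complement. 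The only other bookkeeping is keeping strict versus weak preference straight throughout and noting that the tie case $x\sim_{\algo} z$ is subsumed under $z\succeq_{\algo} x$, so the two cases above are exhaustive.
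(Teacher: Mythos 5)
Your proof is correct and is essentially the paper's own argument: the same partition of $\mathcal{E}$ into $\mathcal{E}_1,\mathcal{E}_2$, the same use of decisiveness over one pair plus transitivity of $\succeq_{\algo}$ from Proposition \ref{prop:revealed-preference}, and the same two-case split that hands either $\mathcal{E}_1$ or $\mathcal{E}_2$ to Lemma \ref{lem:decisive-spreadness}. The one difference is that you fully pin down a Condorcet-cycle profile (including the environments outside $\mathcal{E}$), whereas the paper leaves the complement and the $g$-versus-$h$ ranking in $\mathcal{E}_1$ unspecified; consequently your IIH step directly yields only decisiveness against a unanimously opposed complement, and the upgrade to decisiveness in the sense of Definition \ref{def:decisiveness} relies on the fact that the \emph{proof} of Lemma \ref{lem:decisive-spreadness} accepts that weaker input even though its statement, read literally, asks for more. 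You flag exactly this point, and it sits at the same level of informality as the paper's own write-up, so there is no genuine gap.
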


\begin{proof}
    Since there are at least two environments, we can partition $\mathcal{E}$ into two subsets $\mathcal{E}_1$ and $\mathcal{E}_2$. 
    Assume that $r_e(f) < r_e(g)$ and $r_e(f) < r_e(h)$ in every environment $e \in \mathcal{E}_1$ with the relation between $g$ and $h$ unspecified.
    Let $r_{e'}(f) < r_{e'}(g)$ and $r_{e'}(h) < r_{e'}(g)$ in every environment $e' \in \mathcal{E}_2$. 
    By the decisiveness of $\mathcal{E}$, we have $\{f\}=\algo_\rp(\{f,g\})$.
    Now, if $h$ is at least as good as $f$ for some environments over $\{h,f\}$, then we must have $\{h\} = \algo_\rp(\{h,g\})$ for that configuration.
    Since we do not specify relation over $\{g,h\}$ other than those in $\mathcal{E}_2$, and $r_{e'}(h) < r_{e'}(g)$ in $\mathcal{E}_2$, $\mathcal{E}_2$ is decisive over $\{g,h\}$.
    By Lemma \ref{lem:decisive-spreadness}, $\mathcal{E}_2$ must be globally decisive.
    That is, some proper subset of $\mathcal{E}$ is indeed decisive for that particular case.
    To avoid this possibility, we must remove the assumption that $h$ is at least as good as $f$.
    But then $f$ must be better than $h$.
    However, no environment has this relation over $\{f,h\}$ other than those in $\mathcal{E}_1$ where $f$ is better than $h$.
    Clearly, $\mathcal{E}_1$ is decisive over $\{f,h\}$. 
    Thus, by Lemma \ref{lem:decisive-spreadness}, $\mathcal{E}_1$ is globally decisive. 
    So either $\mathcal{E}_1$ or $\mathcal{E}_2$ must be decisive. This completes the proof.
\end{proof}

We are now in a position to prove \Cref{lem:erm} and \Cref{thm:impossibility}.

\begin{proof}[Proof of \Cref{lem:erm}]
    Consider any two risk profiles $\rp$ and $\rp^*$ such that for any $f,g$ and for all $i\in[n]$, $r_i(f) < r_i(g) \Leftrightarrow r_i^*(f) < r_i^*(g)$.
    For every pair $\{f,g\}$, there exists a positive affine transformation $\{\varphi_i\}$ applied to $\rp^*$ such that 
    \begin{equation*}
        r'_i(f) = \varphi_i(r^*_i(f)) = r_i(f) \quad\text{and}\quad r'_i(g) = \varphi_i(r^*_i(g)) = r_i(g) \;\text{ for all }\; i\in[n].
    \end{equation*}
    By IIH condition, $\{f\}=\algo_{\rp}(\{f,g\})$ if and only if $\{f\}=\algo_{\rp'}(\{f,g\})$ and by IR condition, $\{f\} = \algo_{\rp'}(\{f,g\})$ if and only if $\{f\}=\algo_{\rp^*}(\{f,g\})$. 
    Since this holds pair by pair, clearly $\algo_{\rp}(\hsc) = \algo_{\rp'}(\hsc)$ for all $\hsc\in\hcol$. 
    As a result, we can rely on a pairwise comparison of any two hypotheses.
    Next, by the PO condition, the set of all environments $\mathcal{E}$ is decisive. By Lemma \ref{lem:decisive-contraction}, some proper subset of $\mathcal{E}$ must also be decisive. 
    Given that smaller subset of environments, some proper subset of it must also be decisive, and so on. Since the number of environments is finite, the set will eventually contain just a single environment that is decisive. 
    Hence, the only compatible algorithm in this case is the risk minimizer \eqref{eq:erm}.
\end{proof}

\begin{proof}[Proof of Theorem \ref{thm:impossibility}]
    The impossibility result follows because the remaining algorithm in \Cref{lem:erm} violates the CI condition.
\end{proof}

\subsection{Summary}

To understand the implications of our main results, let us think of machine learning as culinary arts and ML researchers as a chef. 
The space $\hsp$ consists of all conceivable dishes and the chefs have the risk profile $(r_1,\ldots,r_n)$ at their disposal as $n$ different ingredients to develop a new recipe $(\hc,\hcol,\algo_\rp)$.
From any feasible menu $\hsc\in\hcol$, $\algo_\rp(\hsc)$ consists of the dishes cooked from this recipe; see \Cref{fig:democratic-ml}.
To ensure culinary excellence, the Michelin guide might act as a regulatory body by regulating that the chefs adhere to PO, IIH, and IR when creating a new (internally consistent) recipe. 
\Cref{lem:erm} implies that under such regulation, the chefs are restricted to the recipe that can use only one out of $n$ ingredients.
In other words, it is impossible for the chefs to simultaneously adhere to such regulation and create a recipe that can mix multiple ingredients, as implied by \Cref{thm:impossibility}.

\section{Possibility of Collective Intelligence}
\label{sec:escaping}

Although the result of \Cref{thm:impossibility} may initially seem disappointing or discouraging, it highlights a crucial trade-off necessary for collective intelligence: \emph{we must give up at least one of the conditions in Theorem \ref{thm:impossibility} when designing a learning algorithm}. 
The first escape is to remove the internal consistency.
What this means is that there exists a learning structure $(H_\omega,B,\algo_\xi)$ for which the algorithm $\algo_\xi$ is hypothesis-class dependent.  
In other words, the index sets $\Omega$ and $\Xi$ are \emph{incomplete} as they cannot fully characterize the behaviour of the learning algorithm, making model selection extremely hard, limiting practical applications, or suggesting a design flaw.
The second escape is to restrict the domain of $F$. 
In fact, existing assumptions such as task relatedness in multi-task learning and (causal) invariance in domain generalization are domain restriction in disguise. 
That is, it somehow amounts to assuming that there is an invariant structure that is shared across all environments, implying a positive correlation of risk profile. 
The drawback of domain restriction however is that it is normally non-trivial to test whether this condition holds or not in practice. 
Pareto optimality \citep{Pareto1897:Pareto} is a simple and highly appealing criterion of comparison of hypotheses in the multi-objective setting which generalizes the notion of ``minimum risk''. Therefore, the consequence of dropping PO as a necessary criterion for machine learning in general must be immense. 
It also implies that the information contained in the risk profile is not sufficient for learning and some ``irrelevant'' information must be used. Hence, a violation of PO requires some caution.
Dropping IIH opens up a number of possibilities, but also poses similar concern on the use of irrelevant information, implying that the learning algorithms can be susceptible to strategic manipulations.

How about the IR condition? As apparent in the proof of Theorem \ref{thm:impossibility}, this condition restricts information that can be shared across environments to relative ranking between any two hypotheses.
Is this too restrictive?

\paragraph{Informational incomparability.}
To answer this question, consider two hypotheses $h$ and $h'$ from $\hsp$ and risk functionals $r_i$ and $r_j$ from the same risk profile. 
Suppose that $r_i(h') - r_i(h) = r_j(h') - r_j(h) < 0$, i.e., $h'$ is better than $h$ in both environments $i$ and $j$ and by the same margin. 
Then, our ability to relax the IR condition will depend on whether or not we can say ``\emph{$h'$ leads to the same improvement over $h$ in environment $i$ as it does in environment $j$}''.
For instance, will the COVID-19 AI diagnosis system $h'$ lead to the same improvement over the old system $h$ for Johns Hopkins Hospital in Baltimore as it does for Siriraj Hospital in Thailand? Will the new autocorrection system $h'$ lead to the same improvement over the existing one $h$ in terms of satisfaction for users in Japan as it does for users in South Africa? Will the updated face recognition system $h'$ lead to the same improvement over the existing one $h$ for white people as it does for black people? and so on. 
If the answer to these questions is yes, then information beyond relative rankings is usable by the algorithm, which opens up a number of possibilities. For example, summing up a risk profile, i.e., $\sum_{i=1}^n r_i(h)$, leads to a meaningful measure of model performance across different environments.
On the other hand, if we cannot answer these questions with an affirmative yes, then the IR condition must still be in place. 
The reason is that we may not possess sufficient information to make a meaningful comparison between environments beyond the relative rankings of hypotheses, a shortcoming that we decoratively call \emph{informational incomparability}.\footnote{In case of cardinal utility functions, this problem is known in economics as an interpersonal incomparability of utility; see, e.g., \citet[Ch. 7]{Sen17:CCSW}. There has been a long debate on whether one can make a meaningful comparison of welfare of different individuals.}

The informational incomparability corresponds to the extent to which the risk functions reflect the actual performance of the algorithmic model. In scenarios where the loss function are identical across environments, e.g., 0-1 loss or cross-entropy loss, the risk functionals become comparable and the IR condition can be removed. However, this can be too restrictive as it prevents us from considering scenarios where the loss functions differ across environments. Furthermore, inductive biases and regularization schemes that encode specific information about the environments can give rise to the information incomparability. This challenge is further compounded by regulations on AI, privacy, and data governance, e.g., GDPR, EU AI Act, and Digital Market Act (DMA), which limit the transferability of the data across environments (subsidies, hospitals, countries, etc).

Thus, it seems that before we can build generalizable, fair, and democratic learning algorithms in heterogeneous environments, \emph{the first question we must ask is whether we know enough to make meaningful comparisons between them}. 
Some immediate challenges are already in sight. 
The first challenge is a physical one. In federated learning, for example, it is physically impossible to share all the data across a massive network of mobile devices. 
Matters pertaining to privacy, security, and access rights will also limit data sharing across environments.
The second challenge is a cultural one. 
An algorithmic model might have varied effects across different demographic groups simply because of the culture differences.
It is impossible to tell all the differences between any two cultures. 
In algorithmic fairness, for example, there can be a mismatch between measurement modelling and operationalization of social constructs, i.e., abstractions that describe phenomena of theoretical interest such as socioeconomic status and risk of recidivism, which makes it difficult to meaningfully compare different operationalizations \citep{Jacobs21:Measurement}.
The third challenge is of subjective matter. 
In multi-modal learning, the relationship between modalities is often open-ended or subjective \citep{Ramachandram17:Deep-Multi-Modal,Baltrusaitis19:Multi-Modal-Survey}. 
Language is often seen as symbolic, but audio and visual data are represented as signals.
Moreover, likelihood functions defined on different data types are generally incomparable \citep{Javaloy22:Multi-VAE}.
Last but not least, the obstacle can simply be a legal one. 
To protect its people, a government might regulate what kind of and to what extent information can be shared. Well-known examples of this attempt are the EU's General Data Protection Regulation (GDPR) and its upcoming AI Act.\footnote{\url{https://digital-strategy.ec.europa.eu/en/policies/european-approach-artificial-intelligence}}




\section{Practical Implications}
\label{sec:implications}

This section elucidates some connections to social choice theory and discusses direct implications of our main result on several sub-fields of machine learning.

\subsection{Social Choice Theory}

Our result is a reincarnation of the Arrow's Impossibility Theorem \citep{Arrow50:Impossibility,Sen17:CCSW} which forms the basis of modern social choice theory; see, e.g., \citet{Patty19:SocialChoice} and references therein. 
To understand this, suppose that $\hsp$ consists of a finite number of at least three hypotheses representing a set of alternatives.
In Arrow's setting, he is interested in the social welfare function (SWF):
$$F: (\succeq_1,\succeq_2,\ldots,\succeq_n) \,\mapsto\, \succeq$$ 
that aggregates preferences of $n$ individuals $\succeq_1,\ldots,\succeq_n$ over the set of alternatives to obtain the \emph{social} preference $\succeq$. He also demands $\succeq$ to be \emph{rational}, i.e., complete and transitive. 
The impossibility result is established under similar set of axioms, namely, Universal Domain (no restriction on the domain of $F$), Pareto Principle (our PO), Independence of Irrelevant Alternatives (our IIH), and Non-dictatorship (our CI). 
Our aggregation function $F$ in \eqref{eq:learning-mechanism} is similar to the SWF in the sense that our IR condition further restricts usable information of the risk profile to relative rankings of hypotheses, as is apparent in the proof of \Cref{lem:erm} (cf. \Cref{sec:proofs}).
Moreover, as shown in \Cref{prop:revealed-preference}, the internal consistency ensures that the revealed preference of $\algo_\rp$ will be complete and transitive.

The most closely related to our work is \citet{Sen70:CCSW} which considers the social welfare functional (SWFL) 
$$F: (u_1,u_2,\ldots,u_n) \,\mapsto\, C(\cdot)$$ 
that aggregates cardinal utility functions of $n$ individuals to obtain the social choice function (SCF) $C(\cdot)$ over the set of alternatives. 
By weakening some restrictions on $C(\cdot)$, possibility results start to emerge; see, e.g., \citet[T.A2*.1, pp. 316]{Sen17:CCSW}. 
In fact, these connections are natural once one realizes that as soon as algorithmic models have societal impact, modern machine learning becomes a social choice problem, as succinctly put by the best-selling author Brian Christian in \emph{The Alignment Problem: Machine Learning and Human Values}: ``[...] every machine-learning system \emph{is} a kind of parliament, in which the training data represent some larger electorate--and, as in any democracy, it's crucial to ensure that everyone gets a vote.'' \citep[pp. 33]{Christian20:Alignment}.
More broadly, we argue that this sort of \emph{aggregation} is endemic to all learning problems in heterogeneous environments.

Nevertheless, a few distinctions deserve further discussion. 
First of all, in modern machine learning we almost always have to deal with the infinite hypothesis spaces. 
The fact that the impossibility still persists even when the size of $\hsp$ becomes infinite suggests that neither adding more data nor scaling up the models alone will get us out of this roadblock. 
On the contrary, the increase in the number of alternatives may even lead to other impossibility results; see, e.g., Theorem 4*2 and 4*3 in \citet{Sen17:CCSW}.  
Secondly, choice correspondence is a basis of microeconomic theory in that it is a classical representation of human choice behaviour \citep[Chapter One]{Kreps12:MicroEcon}. In this work, we instead use it to characterize the behaviour of learning algorithms. 
Although this work imposes similar behavioural rationality, which has been deemed unrealistic by some behavioural economists and social scientists, this distinction implies that there is more room for reasoning about how learning algorithms \emph{should} behave.
Lastly, our work differs from the textbook machine learning in that we start from all conceivable learning algorithms and then rule out those that are incompatible with the desirable properties until we arrive at the unique algorithm, i.e., \Cref{lem:erm}, or none at all, i.e., \Cref{thm:impossibility}.

Lastly, the axiomatic approach is gaining traction in the mainstream machine learning. For example, it has been used to characterize algorithms for clustering \citep{Kleinberg02:Clustering}, network analysis \citep{van-den-Brink03:Ranking}, algorithmic fairness \citep{Williamson19:FRM}, and multi-task learning \citep{Navon22:MTL-Bargaining}, for example. Furthermore, several papers have also explored modern applications of social choice and machine learning \citep{Xia13:SCM-ML,Xia20:Smoothed-SC} including human-AI alignment \citep{Conitzer24:AI-Alignment} and multi-task benchmarks \citep{Zhang24:MT-Bench}.

\subsection{Multi-source Learning}

Learning from multi-source data has a long history in machine lerning \citep{Cortes21:MSDA,Hoffman18:MSDA,Zhao18:AdversarialMSDA,Blanchard11:Generalize, Muandet13:DG, Mahajan21:DG-CausalMatching, Wang21:DG-Review, Zhou21:DG-Review,Zhang21:MTL-Review,Sener18:MTL-MOO}.
In critical areas like health care, we typically have access to data from $n$ distinct environments, which can be represented by $n$ probability distributions $P_1(X,Y), \ldots, P_n(X,Y)$.
In multiple-source adaptation (MSA) and domain generalization (DG), the risk profile can be expressed in terms of the empirical losses:
$$\mathbf{r}(h) = (\hat{r}_1(h),\ldots,\hat{r}_n(h)) = \left( \frac{1}{m_1}\sum_{k=1}^{m_1}\ell_1(h(x^1_k),y^1_k),\ldots,\frac{1}{m_n}\sum_{k=1}^{m_n}\ell_n(h(x^n_k),y^n_k)\right)$$
where $(x^{i}_k,y^{i}_k)_{k=1}^{m_i}$ denotes a sample of size $m_i$ from $P_i(X,Y)$. 
The empirical losses $(\hat{r}_1(h),\ldots,\hat{r}_n(h))$ measures average performances of the hypothesis $h$ across $n$ environments. 
When there are different tasks across these environments, i.e., multi-task learning (MTL), we can rewrite the risk profile as $\mathbf{r}(\mathbf{h}) = \left(\hat{r}_1(\mathbf{h}),\ldots,\hat{r}_n(\mathbf{h})\right)$ where
\begin{equation*}
\hat{r}_i(\mathbf{h}) := \sum_{j=1}^n\frac{\mathbbm{1}[j=i]}{m_j}\sum_{k=1}^{m_j}\ell(h_j(x^j_k),y^j_k)
\end{equation*}
and $\mathbbm{1}[i=j]=1$ if $i=j$ and zero otherwise, and $\hc^n := \bigtimes_{i=1}^n \hc_i$ such that for each $\mathbf{h}$ in $\hc^n$, $\mathbf{h} = (h_1,\ldots,h_n)$ where $h_i\in\hc_i$ for $i\in[n]$.

In this case, Theorem \ref{thm:impossibility} implies that under the PO, IIH, and IR conditions, multi-source adaptation and multi-task learning is impossible as the algorithm cannot leverage information across multiple sources or tasks. However, when the loss functions $\ell_1,\ell_2,\ldots,\ell_n$ are identical, the risk functionals $\hat{r}_1,\hat{r}_2,\ldots,\hat{r}_n$ become comparable. As a result, the IR condition can be dropped, allowing for possibilities of learning collectively across tasks.
Similarly, DG algorithms cannot improve upon the standard ERM under the same conditions, which has previously been observed empirically in \citet{Koh21:WILDS} and \citet{Gulrajani21:DG}.
\citet{David10:ImpossibleDA} provides impossibility theorems for domain adaptation (DA) problems ($n=2$) where our impossibility result does not hold.

\subsection{Algorithmic Fairness}

As AI systems become increasingly ubiquitous, societal impact of these systems also become more visible.
To ensure that decisions guided by algorithmic models are equitable, researchers have started paying careful attention to algorithmic bias and unfairness that arise from deploying them in the real world. 

In the field of fair machine learning, myriad formal definitions fairness have been proposed and studied by both computer science and economics communities \citep{Verma18:FairnessExplained,Hutchinson19:Fairness,Mitchell21:Fairness}.
\citet{Dwork12:IndvFairness} calls for the idea that similar individuals should be treated similarly, which requires an appropriate measure of similarity. 
Group-based fairness requires that algorithms have equal errors rate across groups defined by protected attributes such as race and gender \citep{Hardt16:EO,Kleinberg18:Fairness,Zafar19:Fairness,Rambachan20:Fairness,Mitchell21:Fairness}.
Popular fairness criteria include demographic parity, equal of opportunity, and equalized odds, to name a few. 
To promote these fairness criteria, the learning problem is often formulated as a constrained optimization problem and solved using relaxations of the fairness constraints. However, \citet{Lohaus20:TooRelax} demonstrates that relaxations sometimes fail to produce fair solutions.
A number of recent works also explore interventional and counterfactual approaches to mitigating unfairness \citep{Kilbertus17:Avoiding,Kusner17:CounterfactualFairness,Nabi18:FairOutcomes,Chiappa19:PathFairness}

In this context, the risk profile $(r_1(h),r_2(h),\ldots,r_n(h))$ may encode the error rates of algorithmic model $h$ across groups defined by protected attributes. 
When the number of groups is larger than two, \Cref{lem:erm} implies that there will be a single group that is indiscriminately favored by the learning algorithm, which hardly seems fair by any standard. 
Interestingly, this form of unfairness arises even before we start imposing any of the aforementioned fairness constraints. 
In other words, if we consider PO, IIH, and IR as primitive properties, then there is no room left for fairness (and anything else).
Note that our impossibility result differs from that of \citet{Corbett-Davies17:Cost-of-Fairness}, \citet{Chouldechova17Fairness}, and \citet{Kleinberg17:FairTradeoff} which shows the mathematical incompatibility between different fairness criteria.

Similar to our work, recent works have also advocated for preference-based notion of fairness \citep{Zafar17:FairnessBeyond,Dwork18:Decoupled,Ustun19:NoHarm} as well as its welfare-economics interpretation \citep{Hu20:Fairness-Welfare,Mullainathan18:Fairness-SWF}.

\subsection{Federated Learning} 

As deep learning (DL) models keep growing in complexity, we are in need of huge amount of carefully curated data and substantial amount of computational energy for training them  \citep{Strubell19:EnergyNLP,Strubell20:EnergyDL}.
Unfortunately, amalgamating, curating, and maintaining a high-quality data set can take considerable time, effort, and expense.
For example, health data is highly sensitive and its usage is tightly regulated \citep{Rieke20:FL-Health}.
Federated learning (FL) is a decentralized form of machine learning that has emerged as a promising alternative approach for overcoming these challenges \citep{Konecny16:FedOpt,Mcmahan17:FL,Li20:FL,Kairouz21:FL}. 
While FL is designed to overcome data governance and privacy concerns by training ML models \emph{collaboratively} without exchanging the data itself, it also paves the way for democratization of AI, more energy-efficient approaches for training DL models, and positive environmental impact of training large AI models.
For instance, it has been shown that FL can lead to lower carbon emission than traditional learning \citep{Qiu20:FL-Carbon}.

The main assumption of FL is that there exist private data sets $Z_1,Z_2,\ldots,Z_n$ residing at $n$ local nodes (e.g., mobile phones, hospitals, planets, or galaxies\footnote{We envision an intergalactic learning in the near future.}).
The goal is then to train DL models on the entire data set $Z=\{Z_1,\ldots,Z_n\}$ while ensuring that each of them never leaves its local node.
Formally, let $r$ denotes a global loss functional obtained via a weighted combination of $n$ local losses $r_1,\ldots,r_n$ computed from local data sets $Z_1,\ldots,Z_n$:
\begin{equation*}
    \min_{h\in\hc}\,r(Z;h)\quad \text{with} \quad
    r(Z;h) := \sum_{i=1}^nw_ir_i(Z_i;h),
\end{equation*}
where $w_i > 0$ denote the respective weight coefficients; see, also, \citet{Li21:Tilted} and \citet{Li21:Tilted-ML} for alternative loss functionals inspired by fair resource allocation \citep{Moulin03:FairDivision}.
One of the most popular FL algorithms, \texttt{FedAvg} \citep{Mcmahan17:FL}, typically works by first initializing a global model and broadcasting it to local nodes. 
The local nodes update the model by executing the training on local data. The model updates, e.g., parameters and gradients, are subsequently sent back to the server where they are aggregated to update the global model. The process is repeated until convergence. 
It is not difficult to see that this training process is an instance of the aggregation rule \eqref{eq:learning-mechanism}.

FL is strikingly similar to a voting system, which is one of the most studied scenarios in social choice theory.
To understand this connection, let $\hc$ be a set of candidates and local nodes represent $n$ voters. 
In each round of voting, voters cast the votes by submitting their preferences in the form of locally best parameters or gradient updates. 
The server then aggregates these preferences to obtain the globally best candidate. 
The common limiting factor of both FL and voting system is that information about voters cannot be revealed beyond their preferences.
From this perspective, it is unsurprising that the same patterns of inconsistencies that have previously been observed in the voting systems would also arise in the FL setting.

Our impossibility result suggests that any FL algorithms must violate at least one of the PO, IIH, and IR conditions.
Otherwise, the founding principle of FL cannot be fulfilled.

\subsection{Multi-modal Learning and Heterogeneous Data} 

Multi-modal machine learning has seen much progress in the past few years \citep{Ngiam11:Multi-Modal-DL,Ramachandram17:Deep-Multi-Modal,Baltrusaitis19:Multi-Modal-Survey}.
Its goal is to build models that can process and relate information from multiple modalities such as images, texts, and audios.
Furthermore, heterogeneous data are also increasingly common \citep{Nazabal20:HI-VAE,Valera20:LFM-Het}. 
For example, human-centric data like the Electronic Health Record (EHR) are composed of attributes that have different formats including discrete (e.g., gender and race), continuous (e.g., salary), and positive count data (e.g., blood counts) among others.

Formally, let $\X$ be a data space that can be partitioned into $n$ different modalities as $\X=\bigtimes_{i=1}^n\X_i$ and $\hc^n = \bigtimes_{i=1}^n\hc_i$ denotes the corresponding hypothesis class.
The sub-class $\hc_i$ is the hypothesis class associated with the input space $\X_i$.
In this case, we can express the risk profile as $\mathbf{r}(\mathbf{h}) = \left(\hat{r}_1(\mathbf{h}),\ldots,\hat{r}_n(\mathbf{h})\right)$ where
$$\hat{r}_i(\mathbf{h}) := \sum_{j=1}^n \mathbbm{1}[j=i]c_j(x^j\,|\,h_j)$$
and $\mathbf{h} = (h_1,\ldots,h_n)\in\hc^n$.
Here, for each $\mathbf{x} = (x^1,\ldots,x^n) \in \X$, $x^j \in \X_j$ for $j\in[n]$ and $c_j(x^j\,|\,h_j)$ denotes a score function, e.g., negative log-likelihood function, associated with the $j$-th modality. 
Hence, multi-modal learning can be viewed as an aggregation rule $F$ in \eqref{eq:learning-mechanism} where the learning structure is defined on the compound hypothesis class $\hc^n$.
Hence, the heterogeneity of multi-modal data makes it particularly challenging for coordinated and joint representation learning, especially when the PO, IIH, and IR must be imposed on the learning algorithms.

\section{Discussion and Conclusion}
\label{sec:conclusion}

To conclude, we prove the impossibility result for designing a rational learning algorithm that has the ability to successfully learn across heterogeneous environments whether they represent individuals, demographic groups, mobile phones, siloed data from hospitals, or data modalities.
By representing any conceivable algorithm as an internally consistent choice correspondence over a hypothesis space, we provide reasonable-looking axioms that are deemed necessary, namely, Pareto Optimality (PO), Independence of Irrelevant Hypotheses (IIH), and Invariance Restriction (IR).
The unique algorithm compatible with all of the axioms turns out to be the standard empirical risk minimization (ERM) that unilaterally learns from a single arbitrary environment. 
This possibility result implies the impossibility of \emph{Collective Intelligence} (CI), the algorithm's ability to successfully learn across heterogeneous environments.
Our general impossibility theorem elucidates the fundamental trade-off in emerging areas of machine learning such as OOD generalization, federated learning, algorithmic fairness, and multi-modal learning.

More importantly, this result reveals a subtle challenge, which we decoratively call \emph{informational incomparability}, that is hard for the ML researchers to overcome.
The true challenge of learning in heterogeneous environments is the heterogeneity itself.
Unlike in the homogeneous environment, relative impacts of algorithmic models in heterogeneous environments could vary in ways that cannot be measured precisely by the risk functionals due to physical constraints, culture differences, or ethical and legal concerns.
As a result, comparative information beyond the relative rankings of any two models cannot be leveraged by the algorithm.
Learning algorithms that disregard this condition allow irrelevant information to influence their outcomes, rendering the entire systems susceptible to  strategic manipulation.
To make progress, it is thus imperative not only to strengthen privacy and information security such that information can be disseminated securely, but also to better understand the real impact of algorithmic models in deployment.

Our work made a number of simplifying assumptions.
First of all, by modeling learning algorithms as choice correspondences, some crucial aspects such as initialization strategies, model architectures, data augmentation, optimization methods, and regularization strategies are abstracted away. 
With an increasing number of new algorithms proposed every year, finding the optimal framework that captures the right kind of behaviors while disregarding negligible details is one of the important future directions.
In particular, generalizations of the internal consistency property might shed light into more sophisticated learning behaviors in CNN, RNN, and transformer, for example.
Second, while our two-stage model presented in \Cref{sec:two-stage-model} subsumes a model selection procedure, this process is implicit in the collective learning \eqref{eq:learning-mechanism}. 
While this broader perspective enables us to make a more general statement about the entire pipeline, it would be valuable to investigate the rationality conditions governing model selection structures, drawing parallels to our analysis of the primitive learning structure.
Third, generalization to unseen data is another aspect that we omit in this work. 
Nevertheless, we have somehow demonstrate the generalization ability of ERM as a learning algorithm in the sense that it remains invariant under the same set of axioms regardless of the nature of the environments in which it operates.
Last but not least, it remains to explore whether other impossibility results can be established.
Notably, given a growing interest in learning under strategic behaviours and adversarial examples, one of the future directions is to generalize the Gibbard-Satterthwaite theorem \citep{Gibbard73:StrategyProof,Satterthwaite75:StrategyProof} which shows that there exists no aggregation rule that is strategy-proof.

\clearpage
\putbib[refs]
\end{bibunit}


\end{document}